\newcommand{\xri}[3][]{%
\if\relax\detokenize{#3}\relax
#1_{r_{#2}}%
\else
#1_{r_{#2 \to #3}}%
\fi
}
\newcommand\nnfootnote[1]{%
  \begin{NoHyper}
  \renewcommand\thefootnote{}% no printed number
  \footnote{#1}% make the footnote
  \addtocounter{footnote}{-1}% restore footnote counter
  \end{NoHyper}
}
\newcommand*\diff{\mathop{}\!\mathrm{d}}
\title{Back to the Baseline: \texorpdfstring{\\}{ }
       Examining Baseline Effects on Explainability Metrics}
\author[1,2]{Agustin Picard*}
\author[1,2]{Thibaut Boissin*}
\author[5]{\authorcr Varshini Subhash}
\author[3,4]{Rémi Cadène}
\author[2,3]{Thomas Fel}
\affil[1]{IRT Saint-Exupéry, Toulouse, France}
\affil[2]{Artificial and Natural Intelligence Toulouse Institute, Université de Toulouse, France}
\affil[3]{Carney Institute for Brain Science, Brown University, USA}
\affil[4]{Sorbonne Université, CNRS, France}
\affil[5]{Harvard University}
\date{June 2024}
\begin{document}

\maketitle

\begin{abstract}

Attribution methods are among the most prevalent techniques in Explainable Artificial Intelligence (XAI) and are usually evaluated and compared using Fidelity metrics, with Insertion and Deletion being the most popular.
These metrics rely on a baseline function to alter the pixels of the input image that the attribution map deems most important.
In this work, we highlight a critical problem with these metrics: the choice of a given baseline will inevitably favour certain attribution methods over others. More concerningly, even a simple linear model with commonly used baselines contradicts itself by designating different optimal methods.
A question then arises: \textit{which baseline should we use?}
We propose to study this problem through two desirable properties of a baseline: \textit{(i)} that it removes information and \textit{(ii)} that it does not produce overly out-of-distribution (OOD) images.

We first show that none of the tested baselines satisfy both criteria, and there appears to be a trade-off among current baselines: either they remove information or they produce a sequence of OOD images.
Finally, we introduce a novel baseline by leveraging recent work in feature visualisation to artificially produce a model-dependent baseline that removes information without being overly OOD, thus improving on the trade-off when compared to other existing baselines. Our code is available at \href{https://github.com/deel-ai-papers/Back-to-the-Baseline}{\nolinkurl{https://github.com/deel-ai-papers/Back-to-the-Baseline}} 
\end{abstract}

\section{Introduction}

\nnfootnote{\hspace*{-2mm}* Equal contribution}

\begin{figure}[ht]
\centering
\centerline{\includegraphics[width=1.05\textwidth]{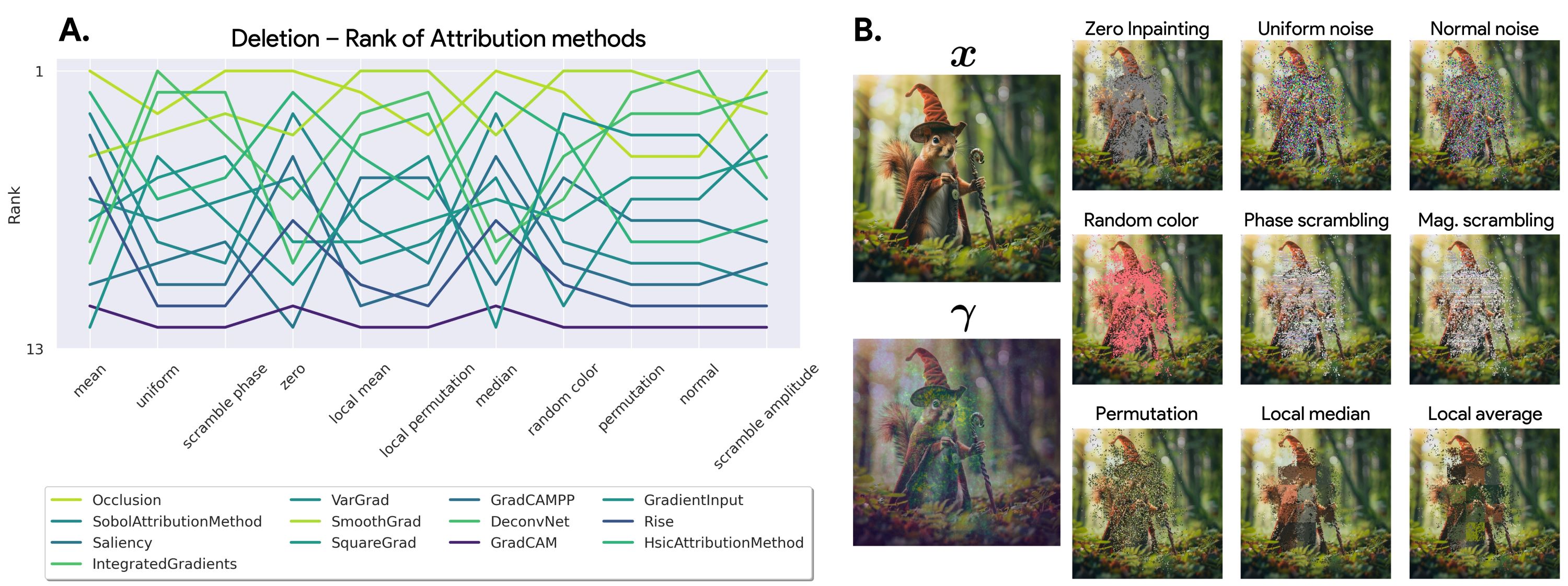}}
\caption{\textbf{A) Deletion ranking is highly sensitive to the baseline.} We study the impact of the choice of the \textit{baseline} on measuring the \textit{faithfulness} of attribution methods for the same ResNet50~\cite{he2016deep} trained on ImageNet1k~\cite{krizhevsky2012imagenet}(See Appendix~\ref{apx:vit-results} for results on a ViT~\cite{dosovitskiy2020image}). Our analysis reveals that the performance of attribution methods under the Deletion metric~\cite{petsiuk2018rise} is highly dependent on the chosen baseline, suggesting that the selection of a baseline can be manipulated to favor particular methods. \textbf{B) Example of how the baseline modifies an input image.} Given an input-explanation pair $(\x, \bm{\gamma})$, we replace the most important pixels of $\bm{x}$ given by $\bm{\gamma}$ with a series of different baselines.}
\label{fig:bigpicture}
\end{figure}

Recent advances in the field of Machine Learning have led to models capable of solving increasingly complex tasks, albeit at the expense of opacity in the strategies they implement~\cite{doshivelez2017rigorous,jacovi2021formalizing,kaminski2021right,kop2021eu}.
Out of this need, a new field was created -- eXplainable Artificial Intelligence (XAI) --, and within, plenty of different approaches to understanding these so-called ``black-boxes'' cropped up~\cite{chen2019looks,kim2018interpretability,ghorbani2019towards,fel2023craft,elhage2022toy,olah2017feature,bricken2023monosemanticity}. One of the most notable is attribution methods~\cite{Zeiler2011,zeiler2014visualizing,GradCAM,novello2022making}, which attempt to explain the model's predictions by finding the pixels or area deemed most important in an image for a given prediction.

The applications of these techniques are diverse -- from aiding in refining or troubleshooting model decisions to fostering confidence in their reliability~\cite{doshivelez2017rigorous}. 
However, a significant drawback of these methods is their susceptibility to confirmation bias: although they seem to provide meaningful explanations to human researchers, they might generate incorrect interpretations~\cite{adebayo2018sanity, ghorbani2017interpretation, slack2021counterfactual}.
In other words, the fact that the explanations are understandable to humans does not guarantee they accurately represent the model's internal processes.
Consequently, the field is actively pursuing improved benchmarks by introducing Fidelity (or Faithfulness) metrics~\cite{samek2015evaluating,petsiuk2018rise, aggregating2020,jacovi2020towards,hedstrom2022quantus,fel2022xplique,kokhlikyan2020captum,hedstrom2023meta}.

These so-called Fidelity metrics measure how closely the explanation reflects the model's local behaviour. This is typically done by using the explanations to carefully pick parts of the input, replacing them with an arbitrary baseline, and then measuring the impact on the model's output. Hence, the outcome of this procedure heavily depends on how the impact on the model is measured precisely, as well as on the choice of the baseline and its interaction with the model and the specificities of the attribution method. We will show later that certain baselines will inherently favour certain types of attribution methods.

While these metrics mark significant progress for the field, one crucial question remains unanswered: each faithfulness metric relies on substituting parts of the input with a \textit{baseline function}. As shown in Figure~\ref{fig:bigpicture}, the choice of this baseline significantly affects the ranking of different attribution methods. This leads to two key questions we aim to address in this work: (i) Can we find the origin of this instability to explain why the ranking using Deletion is unstable while the ranking using Insertion appears stable (see Figure~\ref{fig:insertion-ranking})? (ii) What baseline should we choose?

This paper explores these questions and presents the following findings:

\begin{figure}[t]
\centering
\includegraphics[width=0.75\textwidth]{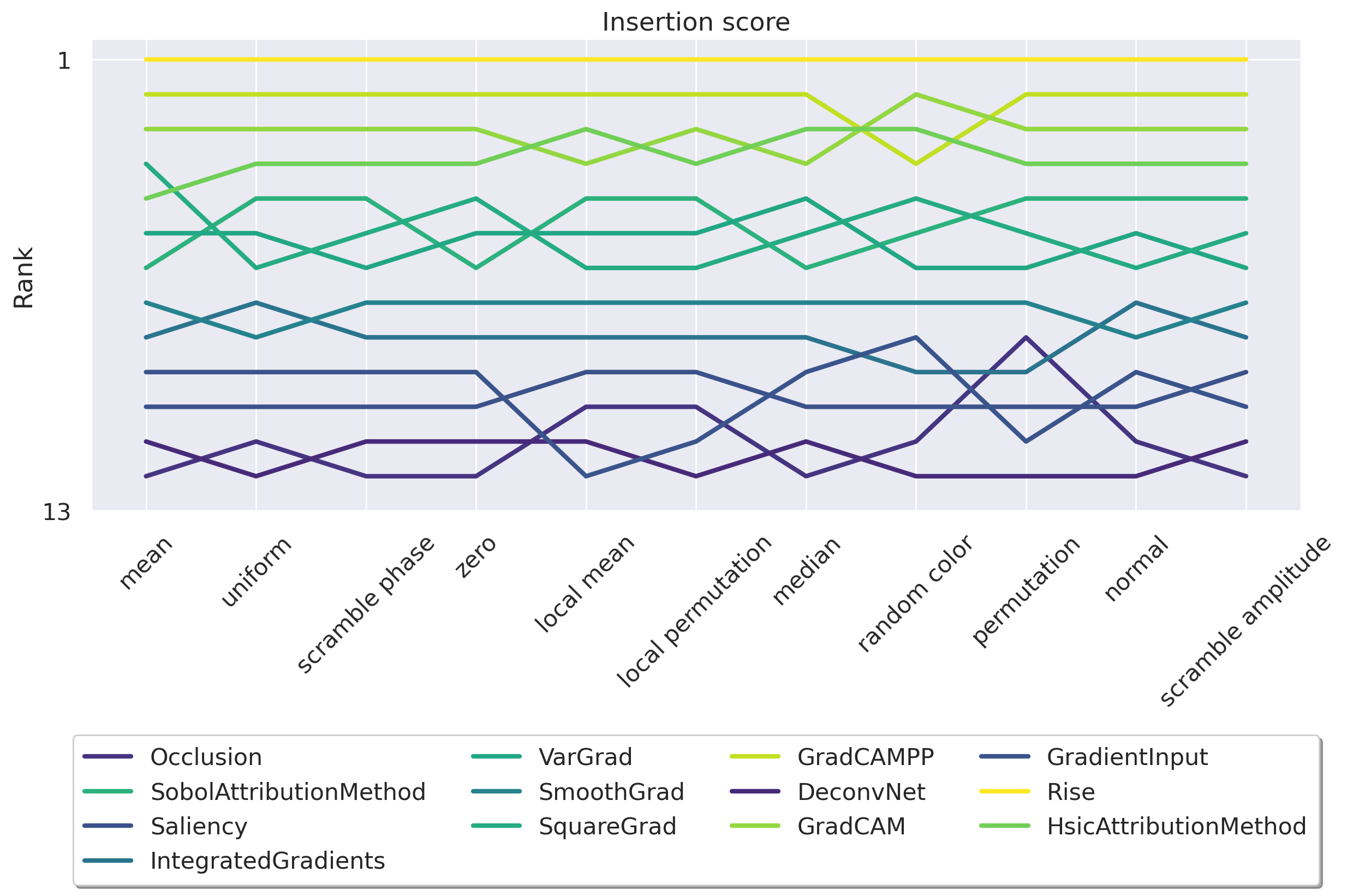}
\caption{
\textbf{Insertion ranking of attribution methods is more stable than Deletion.} We investigate how different baseline choices affect the assessment of attribution method faithfulness for a ResNet50~\cite{he2016deep} model trained on ImageNet1k~\cite{krizhevsky2012imagenet} (See Appendix~\ref{apx:vit-results} for ViT results). We evaluate Insertion across a series of attribution methods from the literature using 11 different baselines (see Appendix~\ref{apx:baselines} for more details) on a total of 14,000 explanations. Although much more stable than Deletion, the OOD-ness of the different baselines makes this metric unreliable.
\vspace{-3mm}
}
\label{fig:insertion-ranking}
\end{figure}

\begin{itemize}[leftmargin=10mm,itemsep=1mm]
\item We begin by demonstrating that the Deletion metric is highly dependent on the baseline choice, rendering it unreliable for comparing attribution methods. In contrast, the Insertion metric appears stable, but it hides a critical flaw: 80\% of its score is achieved in the high out-of-distribution (OOD) regime. Conversely, 80\% of Deletion's score occurs in a non-OOD regime, raising concerns about the reliability of Insertion.

\item We confirm this instability theoretically: even with a simple theoretical example, we demonstrate that for a linear class of models, the optimal attribution for Deletion and Insertion can vary significantly. Depending on the chosen baseline, the optimal attribution could be Saliency~\cite{saliency}, Occlusion~\cite{zeiler2014visualizing}, RISE~\cite{petsiuk2018rise}, Gradient-Input~\cite{shrikumar2017learning}, or Integrated Gradients~\cite{IntegratedGradients}.

\item After investigating the impact and the source of instability, we propose to evaluate the various baselines through two desiderata identified earlier: the out-of-distribution (OOD) score of a baseline and its information removal score. Essentially, the ability to remove information without causing a significant distributional shift. By leveraging quantitative measures, we observe an inherent trade-off: current baselines cannot efficiently remove information without shifting the data distribution to an unacceptable degree, rendering it out-of-distribution (OOD).

\item We conclude that an effective baseline must be model-dependent to adhere to the OOD (Out-Of-Distribution) property and image-independent to decouple it from potential biases, thereby fulfilling the information removal requirement. To meet these criteria, we propose an optimization-based baseline leveraging recent advancements in Feature Visualization~\cite{olah2017feature}. This approach reframes the original feature visualization objective to generate images that activate zero features in the penultimate layer of the model, thereby ensuring that information is removed while keeping the data within the distribution.
\end{itemize}

\section{Related works}

\paragraph{Attribution methods.}

These methods explain a model's explanations by highlighting the parts of the input that are the most important for the predictions, with two main families of methods: black-box or perturbation-based techniques~\cite{ribeiro2016i,lundberg2017unified,zeiler2014visualizing,fel2021sobol,novello2022making,petsiuk2018rise}, and white-box or gradient-based methods~\cite{saliency,IntegratedGradients,smilkov2017smoothgrad,springenberg2014striving,CAM,GradCAM,zhang2024opti}. During its initial stages, attribution methods were mostly tested qualitatively, with a non-negligible part of confirmation bias. This is why these white-box attribution methods have found themselves under scrutiny in various ``sanity checks''. The original sanity check~\cite{adebayo2018sanity,tomsett2019sanity} studied the behavior of these attribution methods when the model's weights were progressively randomized and when the model was trained with random labels (thus, no actual information about the classes could be learned), hence finding the techniques that didn't elucidate the model's inner workings and only depending on the input image. Several other works extend this study, adding their own new desiderata~\cite{hedstrom2024sanity,binder2023shortcomings,hedstrom2023meta}.

\paragraph{Metrics in XAI.}

Although a plethora of metrics exist for attribution methods~\cite{chen2022what,alvarezmelis2018robust,agarwal2022rethinking,fel2021cannot,fel2020representativity,jacovi2020towards,aggregating2020,samek2015evaluating,jacovi2023diagnosing}, in this work we will focus on faithfulness.
This property describes how well an explanation reflects the actual decision-making processes of the model. This is crucial, as explanations need to accurately convey the model's reasoning to users for them to understand the model. In particular, it is often estimated via the Deletion and Insertion metrics~\cite{samek2016evaluating,petsiuk2018rise} Essentially, they propose that an explanation is more faithful to the model if it can accurately identify the pixels that make the prediction's logit decrease the fastest when replaced with a \textit{baseline value}, or increase the quickest when filling an empty image -- i.e. with only a \textit{baseline value} -- with the most important pixels, respectively. Other similar approaches exist for computing fidelity~\cite{rieger2020irof,ancona2017better,yeh2019infidelity}, but we will concentrate on Insertion and Deletion in this work. The community hasn't just stopped at introducing metrics but has also proposed whole standalone evaluation frameworks~\cite{kim2021hive,hedstrom2023meta,agarwal2022openxai}.

\paragraph{The OOD problem in XAI and its metrics.}

The presence of OOD data should be taken into account when producing explanations that require a baseline, as choosing one will have a considerable impact on the outcome, as demonstrated in~\cite{kindermans2019reliability} for various gradient-based attribution methods. The question of the impact of this choice is also discussed in~\cite{sturmfels2020visualizing} for the case of Integrated Gradients~\cite{IntegratedGradients}, and it specifically affects black-box methods that require a perturbation strategy (with a baseline at its core)~\cite{Fong_2017}.
Hooker et al.~\cite{hooker2018benchmark} argue that using OOD data for creating or evaluating explanations makes it difficult to determine if performance degradation is due to distribution shift or actual information removal. In~\cite{hase2021out}, they go even further and explain that this creates what Jacovi et al.~\cite{jacovi2023diagnosing} call \textit{social misalignment}
They argue that an OOD sample depends on model prior and random initialization, contrary to user expectations, and this also affects metrics evaluating explanation quality. Calling this \textit{missingness bias}, Jain et al.~\cite{jain2022missingness} study how the problem of efficient information removal is slightly better posed in the case of ViTs~\cite{dosovitskiy2020image} where the notion of patches/tokens is well-defined for the model.

More recently, and in line with our work, Blucher et al.~\cite{blucher2024decoupling} investigated the impact of the baseline on black-box attribution methods and proposed a method for consistent faithfulness evaluation.  
When computing the Deletion and Insertion metrics, they found that the ranking of the six attribution methods varied with different baselines. In order to mitigate this, they introduced the SRG metric. However, when testing on more (and different) attribution methods and with a broader choice of baseline, we observe that this is not the case entirely (see Appendix~\ref{apx:unstable-srg}), indicating that there are other phenomena taking place under the hood. In this paper, we study this behaviour and uncover some of the reasons why these metrics are unreliable in their current state.

\section{Why Faithfulness rankings are baseline-dependent}\label{sec:insertion-deletion}

The objective of this section is to demonstrate, both qualitatively and theoretically, that the ranking - i.e., the performance of attribution methods with respect to the deletion and insertion metrics - is highly sensitive to the choice of the baseline function.

\paragraph*{Notations}
Throughout, we consider a general supervised learning setting, with an input space $\X \subseteq \R^d$, an output space $\Y \subseteq \R$, and a predictor $\f \in \mathfrak{F}$, which maps any input $\x \in \X$ to an output $\f(\x) \in \Y$. An attribution method, denoted by $\explainer$, is defined as a functional $\explainer : \X \times \mathfrak{F} \to \R^d$ that takes an input and a predictor, returning a score for each input dimension $\explainer(\x, \f)$. A higher score indicates greater importance for a feature.
We denote by $ \u \in \U \subseteq \{1, \ldots, d\}$ a sequence of indices to be removed and put into a baseline state, with $ \u_{1:i} $ representing the first $i$ elements of the sequence and $ \neg \u $ the complement of $ \u $ on $ \U $.
Finally, a baseline function $\baseline : (\X \times \U) \to \X$ is a function that takes a specific input $\x$ and a subset of indices $\u$, replacing these indices according to a specific scheme, such as inpainting or adding noise. The result, $\baseline(\x, \u)$, is the input $\x$ with all indices in $\u$ set to a baseline state (see Figure~\ref{fig:bigpicture}, B for a qualitative example of baselines).

\subsection{Sanity Check on Metric's Baseline}

An initial experiment is illustrated in Figure~\ref{fig:bigpicture} A, where, for a ResNet50 trained on ImageNet, we tested over 15 attribution methods and computed the deletion score for each using more than 2000 samples from the ImageNet~\cite{imagenet_cvpr09} validation dataset. A formal definition of the deletion metric is provided in Appendix~\ref{ap:notations_proof}. 

We then varied the baseline during the deletion score calculation to obtain different rankings of the methods based on the baseline used. The various baselines employed are detailed in Appendix~\ref{apx:baselines} and can be categorized into four groups: baselines that add noise (Gaussian or uniform noise), baselines that replace with a single value (median, zero, mean), local baselines that replace parts of the image (local mean, shuffling), and spectral baselines that preserve part of the spectrum or the phase of the image.

The results indicate that, depending on the chosen baseline, the ranking provided by the deletion metric, which is supposed to build confidence in our methods, is highly sensitive, thereby making the metric itself questionable. Similarly, we observe the same instability in recent vision models such as ViT (see Appendix~\ref{apx:vit-results}). Moreover, this instability issue does not seem to be model-specific since, as we will see, even for a simple class of functions (linear model), theoretical disparities in optimal methods arise depending on the chosen baseline.

\subsection{The interplay between baselines and attribution methods}
\label{sec:toy-models}

\begin{figure}[ht]
    \centering
    \centerline{\includegraphics[width=1.03\textwidth]{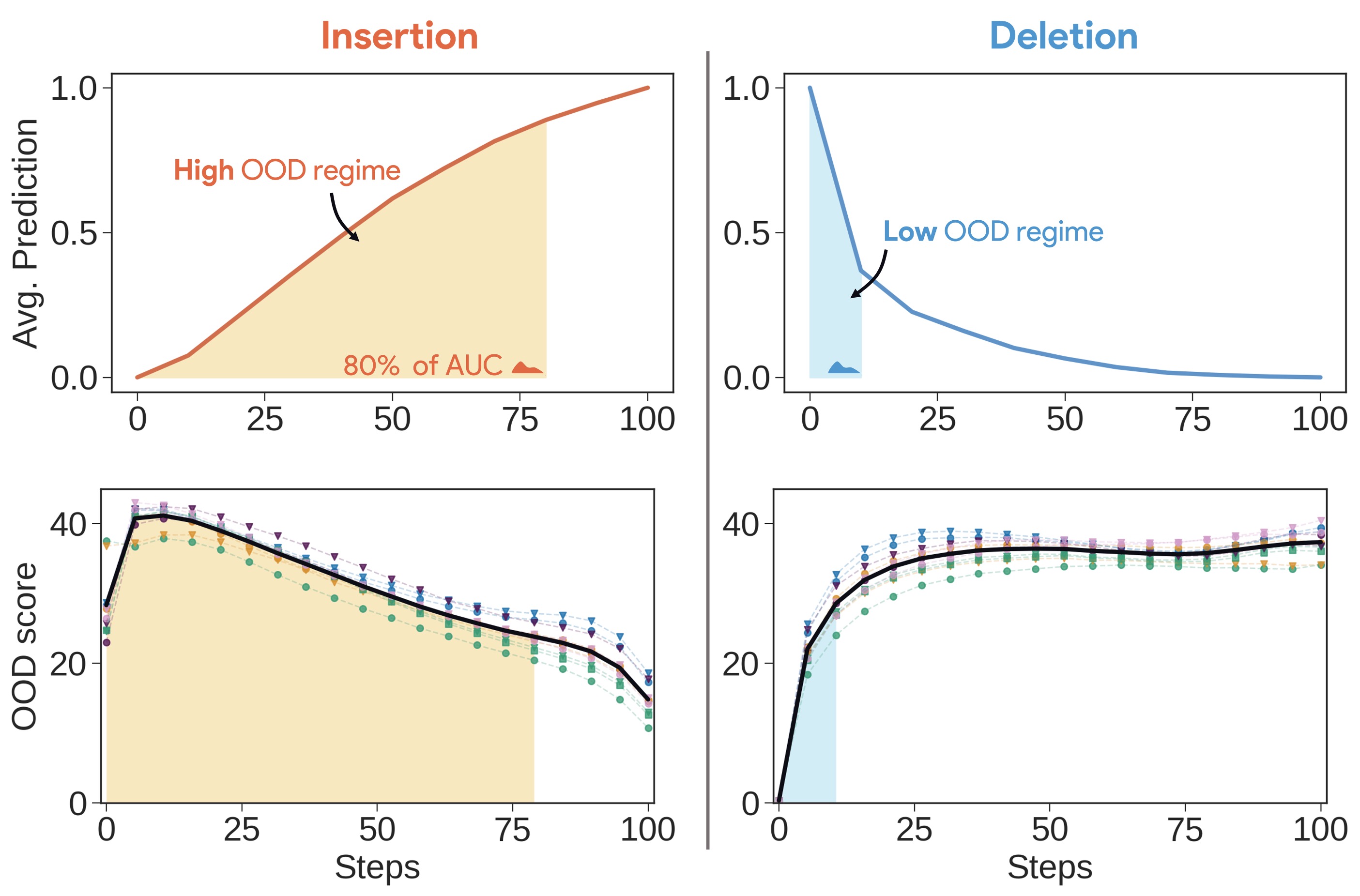}}
    \caption{\textbf{The Cost of stability: the influence of Out-of-Distribution data on Insertion scores.} Insertion and Deletion scores computed on pairs of image-explanation for different baselines. (Top) The highlighted section represents the proportion of deletion and insertion steps (expressed in \%) where 80\% of the cumulative Area Under the Curve (AUC) is achieved. (Bottom) The 1-NN out-of-distribution (OOD) score at each respective step of the Insertion and Deletion metric. We observe that the insertion process predominantly operates on mostly OOD data, in contrast to the deletion process, which primarily concentrates the majority of its score in a smaller number of in-distribution steps.}
    \label{fig:insertion_deletion_tradeoff}
\end{figure}

As seen in the previous section, it is evident that for deep neural networks such as ResNet50 or ViT, the commonly used baselines have a significant impact and can inadvertently favor certain methods. However, it is less clear to what extent this phenomenon holds for simpler models. In fact, we will show that the source of this instability is fundamental, as it occurs even in the case of linear models.
To concretize this, in this section, we will restrict $\f$ to the class of linear functions 

\vspace{-1mm}
$$
\f \in \mathcal{V} \subseteq \mathfrak{F} ~~~\text{with}~~~ \mathcal{V} = \{\x \mapsto \x\w + b \mid \w \in \mathbb{R}^d, b \in \R \}.
$$
\vspace{-1mm}

Under this setup, we can prove that the selection of the baseline changes the optimal method without any particular assumptions on the model's weights. See Appendix~\ref{ap:notations_proof} for all notations.
Indeed, each baseline will impact the trajectory of points used in the deletion and insertion processes, as the most salient pixels are progressively replaced by a baseline. In the case of a linear model, Deletion and Insertion admit a closed form for a fixed baseline, allowing us to evaluate which methods achieve the optimal score.

For example, for the most used baseline, inpainting with zero value as a baseline, we can prove the optimality of some methods:
\begin{proposition}[Gradient-Input, Rise, Occlusion and Integrated Gradients attributions are optimal when $\baseline$ is zero.] \label{prop:optimal_inpainting}
    When $\baseline(\x, \u)$ is the zero function that applies zero value to remove features:
    \vspace{1mm}
    $$
    \baseline(\x, \u)_i = 
    \begin{cases} 
    0  & \text{if } i \in \u, \\
    x_i & \text{otherwise}.
    \end{cases}
    $$
    Then, for the linear model defined in \ref{ap:notations_proof}, the set of attributions that minimizes the Deletion score is Gradient-Input, Integrated Gradient, Rise, and Occlusion.
    %And more generally, all the attribution methods such that $\forall (i,j) x_i \cdot w_i \geq x_j \cdot w_j \implies \explanation(\x)_i > \explanation(\x)_j$ are optimal.
\end{proposition}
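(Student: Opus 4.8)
The plan is to reduce everything to a single observation: for a linear $\f$ with the zero baseline, both the Deletion score and each of the four attributions depend on the coordinates only through the product $x_i w_i$, after which a rearrangement argument settles optimality.

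First I would write down the closed form of the deletion trajectory. Since $\baseline(\x, \u_{1:i})$ merely zeroes the coordinates in $\u_{1:i}$, linearity gives
\[
\f(\baseline(\x, \u_{1:i})) = b + \sum_{j \notin \u_{1:i}} x_j w_j = \f(\x) - \sum_{k=1}^{i} x_{\u_k} w_{\u_k}.
\]
Summing these values over $i = 0, \ldots, d$ to form the Deletion AUC and collecting terms, each contribution $x_{\u_k} w_{\u_k}$ receives weight $(d - k + 1)$, so the score equals $(d+1)\f(\x) - \sum_{k=1}^{d} (d-k+1)\, x_{\u_k} w_{\u_k}$ up to the metric's normalization. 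Because the weights $(d-k+1)$ are strictly decreasing in the removal position $k$, the rearrangement inequality shows that the AUC is minimized exactly when $\u$ lists the coordinates in decreasing order of $x_i w_i$. Thus the minimal Deletion score is attained by \emph{any} removal sequence that sorts the products $x_i w_i$.

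Next I would compute each attribution on a linear $\f$ and verify it induces this sorting. Gradient-Input gives $\x \odot \nabla \f = (x_i w_i)_i$ directly; Occlusion with the zero baseline gives $\f(\x) - \f(\baseline(\x, \{i\})) = x_i w_i$; Integrated Gradients with baseline $0$ gives $(x_i - 0)\int_0^1 \partial_i \f \, \diff \alpha = x_i w_i$, since the gradient is the constant $w_i$. For RISE I would insert a Bernoulli$(p)$ mask $m$ into $\f$ and evaluate $\mathbb{E}_m[\f(m \odot \x)\, m_i]$; using $\mathbb{E}[m_i^2] = p$ and $\mathbb{E}[m_i m_j] = p^2$ for $j \neq i$, this reduces to $p(1-p)\, x_i w_i$ plus a term independent of $i$. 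Hence all four attributions are either exactly $x_i w_i$ or a strictly increasing affine transform of it, so each recovers the optimal decreasing ordering and therefore achieves the minimal Deletion score.

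The main obstacle I anticipate is not any individual computation but pinning down the exact endpoints and normalization of the Deletion metric so that the weighted-sum reformulation is clean, together with the handling of ties: when several products $x_i w_i$ coincide the minimizing ordering is non-unique, but I would argue that every such tie-break yields the same AUC, so optimality is unaffected. A secondary point worth making explicit is the contrast with Saliency, whose linear attribution $|\nabla \f| = |w_i|$ orders by $|w_i|$ rather than $x_i w_i$; exhibiting a single $(\x, \w)$ for which these orderings disagree shows Saliency is generically suboptimal, which both justifies its exclusion from the optimal set and foreshadows the baseline-dependence the paper sets out to establish.
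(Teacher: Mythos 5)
Your proof is correct and takes essentially the same approach as the paper: reduce the Deletion score of the linear model to a weighted sum of the products $x_i w_i$ with monotone position-dependent weights, apply the rearrangement inequality to identify the optimal removal ordering, and verify from closed forms that Gradient-Input, Integrated Gradients, Occlusion and RISE are all increasing (affine) transforms of $x_i w_i$ and hence induce that ordering. If anything, your bookkeeping is more careful than the paper's appendix (consistent coefficients and signs, the exact Integrated Gradients value $x_i w_i$ rather than $\tfrac{1}{2} x_i w_i$, and the explicit handling of ties), but the underlying argument is the same.
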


However, the same proof strategy with a uniform noise baseline yields different optimal methods: 

\begin{proposition}[Saliency and Smoothgrad attributions are optimal when $\baseline$ is Uniform noise.] \label{prop:optimal_uniform}When $\baseline(\x, \u)$ add uniform noise to perturb  a features:
\vspace{1.5mm}
$$
\baseline(\x, \u)_i = 
\begin{cases} 
x_i + \xi & \text{if } i \in \u, \\
x_i & \text{otherwise}.
\end{cases}
$$
With $\xi \sim \text{Uniform}([0, 1])$.
Then, the set of attributions that minimize the Deletion score is Saliency and SmoothGrad.
%And more generally, all the attribution methods such that $\forall (i,j)  w_i \geq w_j \implies \explanation(\x)_i > \explanation(\x)_j$ are optimal.
\end{proposition}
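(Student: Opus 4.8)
The plan is to mirror the closed-form computation behind Proposition~\ref{prop:optimal_inpainting}, but to track how a \emph{random} baseline enters the deletion curve. First I would write the perturbed prediction explicitly: since $\f$ is linear, replacing the coordinates in $\u$ by $x_i + \xi_i$ with $\xi_i \sim \mathrm{Uniform}([0,1])$ gives
$$
\f(\baseline(\x,\u)) = \f(\x) + \sum_{i \in \u} \xi_i\, w_i .
$$
This is the crucial structural observation: the zero baseline of Proposition~\ref{prop:optimal_inpainting} perturbs the output by $-x_i w_i$, an \emph{input-dependent} quantity, whereas the uniform-noise baseline perturbs it by $\xi_i w_i$, whose law depends only on the weight $w_i$ and not on the pixel value $x_i$. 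Hence I expect the optimal ordering to be governed purely by the gradient $\w$, which is exactly what separates Saliency and SmoothGrad from the input-scaled methods that won in the zero case.

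Next I would reduce the deletion score to a function of the ordering. Taking the expectation over $\xi$ along the deletion trajectory $\u_{1:1}, \u_{1:2}, \dots$, the per-step contribution of coordinate $u_j$ to the output deviation is a fixed, increasing function of $|w_{u_j}|$ (its expected magnitude is $\tfrac12 |w_{u_j}|$ and its variance is $\tfrac{1}{12} w_{u_j}^2$), independent of the input. The deletion AUC then collapses to a position-weighted sum $\sum_j c_j\, \phi(|w_{u_j}|)$ in which the coefficients $c_j$ are strictly monotone in the rank $j$, since a coordinate deleted earlier is integrated over more steps. A rearrangement-inequality argument then shows this sum is extremised precisely when the coordinates are sorted according to $|w_i|$, i.e. the deletion score is optimised by the ranking induced by $|\w|$.

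It remains to identify the attribution methods realising this ranking. For a linear model the gradient $\nabla\f \equiv \w$ is constant, so Saliency returns $|\nabla\f|_i = |w_i|$, and SmoothGrad, which averages $\nabla\f$ over noisy copies of the input, returns the same constant $\w$ (the smoothing cannot move a constant gradient); both therefore induce exactly the $|\w|$-ordering. By contrast Gradient-Input, Integrated Gradients, Occlusion and RISE all carry a factor $x_i$ and order by $x_i w_i$, which is suboptimal here — thereby producing the disagreement with Proposition~\ref{prop:optimal_inpainting} that the whole section is built around.

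I expect the main obstacle to be the precise treatment of the random baseline inside the deletion score: one must fix whether the metric averages the signed logit or its magnitude/variance before invoking the rearrangement inequality, and verify that in either convention the controlling statistic is a monotone function of $|w_i|$ alone, so that the input-independent methods come out on top. A secondary technical point is the handling of ties and of the strict monotonicity of the coefficients $c_j$, together with confirming that SmoothGrad's reduction to the bare gradient holds regardless of the noise scale the method uses, since for a linear predictor the gradient is invariant to the smoothing.
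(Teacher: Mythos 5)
Your overall skeleton matches the paper's proof: write the linear closed form of the perturbed logit, take the expectation over the noise, collapse the Deletion AUC into a position-weighted sum whose coefficients are monotone in the deletion rank, invoke the rearrangement inequality, and identify which closed-form attributions realise the extremal ordering. Your key structural observation --- that the uniform-noise baseline shifts the output by $\xi_i w_i$, whose law does not involve $x_i$, unlike the $-x_i w_i$ shift of the zero baseline --- is exactly the mechanism the paper exploits, and your remark that SmoothGrad collapses to Saliency for a linear predictor, independently of the noise scale, agrees with the paper's closed forms. The genuine gap is your passage to absolute values in the middle step.

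The paper's Deletion score is a \emph{signed} sum of logits, $\Deletion(\x,\u)=\sum_{i=1}^{d}\f(\baseline(\x,\u_{1:i}))$, so after taking expectations the coordinate $\u_j$ contributes $(d-j+1)\,\E(\xi)\,w_{\u_j}=\tfrac{1}{2}(d-j+1)\,w_{\u_j}$ to the score: the controlling statistic is the signed weight $w_{\u_j}$, and the rearrangement inequality must be applied to the signed sequence, which is what the paper does. Your reduction to $\sum_j c_j\,\phi(|w_{\u_j}|)$ instead certifies optimality of the $|\w|$-ranking, which differs from the $\w$-ranking whenever the weights have mixed signs; note also that the paper's closed form is $\explainer^{(Sa)}(\x,\f)=\w$ with no absolute value, so the ranking you prove optimal is not the one induced by the methods in the statement. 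Your fallback claim that ``in either convention the controlling statistic is a monotone function of $|w_i|$ alone'' does not hold: in the signed convention the statistic is $w_i$ itself, and in a magnitude convention the per-step deviation $\big|\sum_{j\le i}\xi_j w_{\u_j}\big|$ does not decompose coordinate-wise (terms with positive and negative weights cancel), so no sum of the form $\sum_j c_j\,\phi(|w_{\u_j}|)$ arises. The repair is simply to drop the absolute values: keep the signed expected contribution $\tfrac{1}{2}w_{\u_j}$, apply rearrangement to the signed values, and conclude that the optimal orderings are exactly those consistent with the ranking induced by $\w$, i.e.\ Saliency and SmoothGrad, as in the paper.
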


We refer the reader to Appendix~\ref{apx:toy-proofs} for the proofs' details. A corollary also shows that the optimal methods are the same in the case of insertion. Although simple, this case illustrates that this instability is at the core of our metrics.

\subsection{Trade-offs for Achieving Metric Stability}

\begin{figure}[ht]
    \centering
    \centerline{\includegraphics[width=1.1\textwidth]{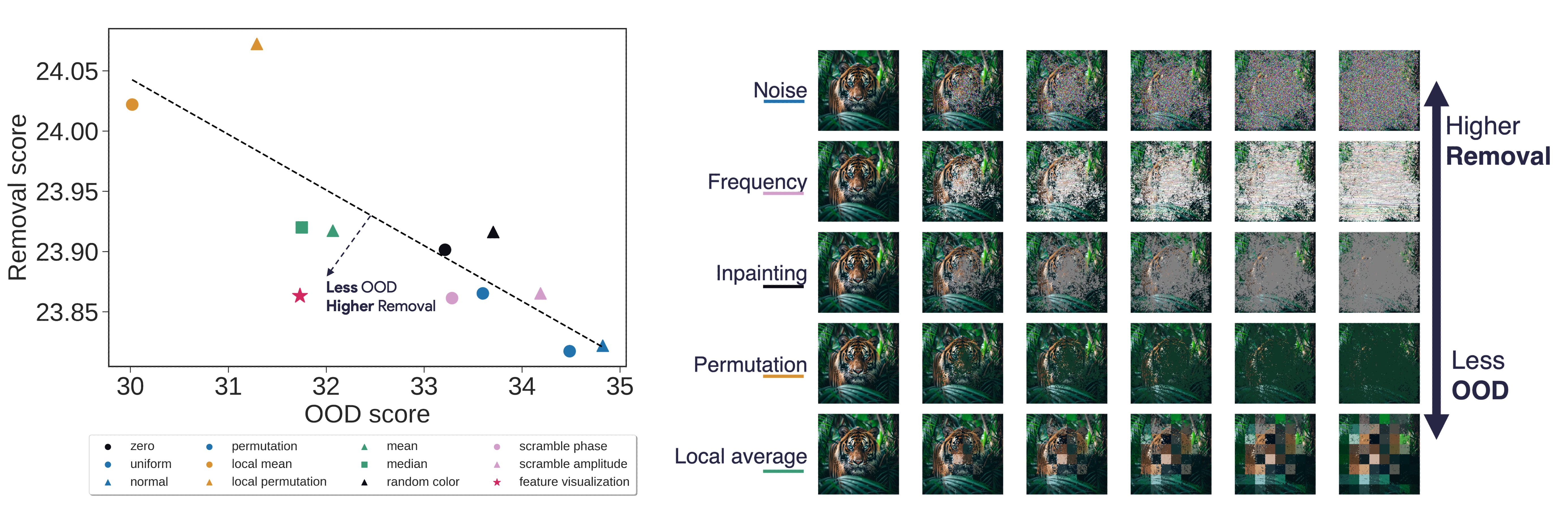}}
    \caption{\textbf{Efficient information removal pushes data Out-of-Distribution (OOD) faster:} Different baselines vary in their ability to remove information (measured by mean logit values across the dataset, where lower values denote greater effectiveness). At the same time, these baselines influence the data's OOD characteristics (evaluated using the 1-NN OOD score). Baselines that reduce the presence of OOD are less effective in removing information, indicating a trade-off between efficient information removal and the induction of OOD. This trade-off is evident both quantitatively (left) and qualitatively (right).}
    \label{fig:qualitative-paths}
\end{figure}

\paragraph{Deletion is sensitive to the baseline.}
In more complex scenarios, particularly when working with deep neural networks, the coupling between attribution methods and baselines is much harder to study, as the way the baseline operation propagates through the network can be difficult to predict. However, this phenomenon can be empirically studied through the instabilities it creates in the Deletion metric (see Figure~\ref{fig:insertion-ranking} (top)).
From Propositions~\ref{prop:optimal_inpainting} and \ref{prop:optimal_uniform}, we can see that the closer an attribution method gets to optimally choosing the proper ordering for a sequence of features to be masked, the more aligned the attribution method will be to the metric with a baseline operator $\baseline(\cdot, \cdot)$. In deep neural networks, this will manifest itself by having Deletion scores that vary wildly between one choice of baseline and another, and it's what we observe in Figure~\ref{fig:insertion-ranking} (left). 

\paragraph{The case of Insertion.}
Intuitively, we would expect the same thing to happen with any other metric based on creating a sequence of masks for the model's input -- \textit{e.g.} the Insertion metric. However, we empirically find this not to be the case. Here we encounter yet another interesting phenomenon that can hinder a metric's capacity to measure an attribution method's faithfulness reliably: the initial image in Insertion is Out-of-Distribution (OOD) -- see Figure~\ref{fig:insertion_deletion_tradeoff} --, and it will align the most with attribution methods that will pull the model out of this regime in as few steps as possible. This explains why the attribution methods' Insertion score has a much lower dependence on the baseline, even though it should theoretically be as dependent as Deletion.

\paragraph{Two desiderata for a good baseline.}
This allows us to establish two desiderata for the baseline operator: (i) we wish to remove as much semantic information from the target region of the input -- \textit{i.e.} high removal score -- as possible, and (ii) we want to do so while keeping the input as much In-Distribution (ID) -- \textit{i.e.} low OOD score -- as possible. Grounding these properties on tractable metrics is essential to their enforcement in practice, which is why we tied them to the following quantities:\\
\textbf{Removal of semantic information.} When a neural network perceives informative semantic information in its input, its activations propagate it to the model's output logits, pushing it to predict a class. Based on this, the most straightforward way to measure information removal is to quantify the decrease in the energy of the logits when the input gets progressively replaced by the baseline.\\
\textbf{Deep-kNN.} Arguably, the simplest model-based OOD metric in the literature is Deep-kNN~\cite{sun2022out}. This choice makes perfect sense, as we wish to measure the OOD-ness of images with respect to the specific model under study, rather than against just a distribution in pixel space. In particular, we set $k = 1$ for our experiments, as we want to quantify how much the baseline pushes the sample to be an outlier for the neural network.

\paragraph{Current baselines don't remove information without making OOD images.} Replacing a region of an image with a baseline without taking into account the rest will break its structure and create images that are at least slightly OOD. This issue can be addressed by employing an inpainting model-based baseline; however, as shown in Figure~\ref{fig:husky}, this approach doesn't resolve the problem either. Our objective is to find a reasonable trade-off between information removal and OOD.

\section{Towards a reliable baseline}\label{sec:desiderata}

A reliable estimation of faithfulness requires that a baseline be efficient at removing semantic information from the input image without pushing it into OOD territory. In practice, this requires some additional considerations that should be taken into account when introducing new baselines.\vspace{1mm}
\paragraph{Baselines should not induce bias:}

\begin{figure}[ht]
    \centering
    \centerline{\includegraphics[width=1.05\textwidth]{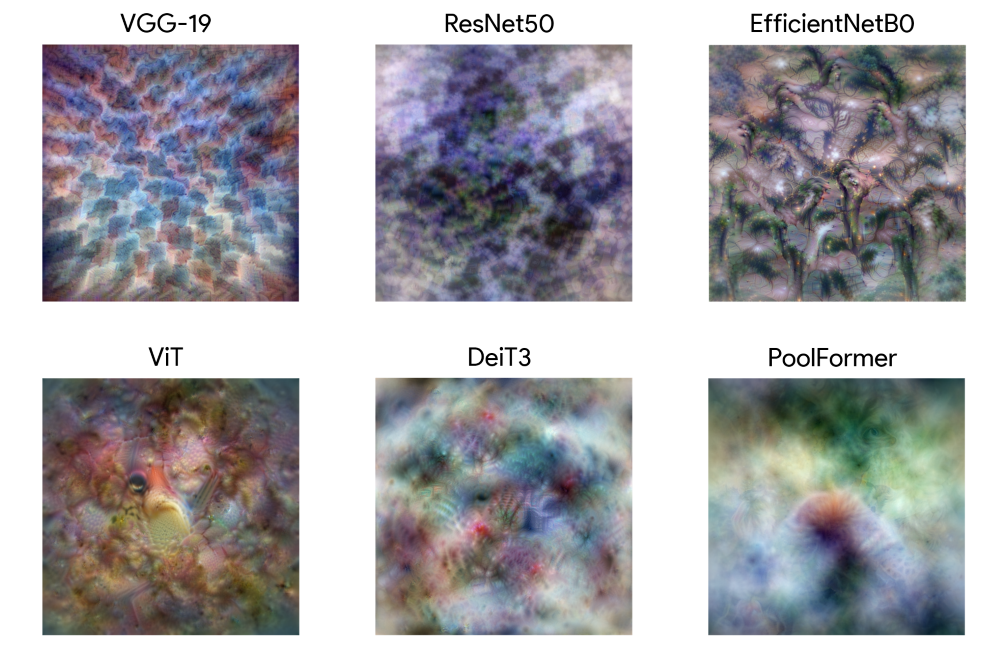}}
    \caption{\textbf{Model-dependent baselines generated using Feature Visualization.} Using MaCo~\cite{fel2023unlocking} to minimize the objective in Equation~\ref{eq:baseline-fv}, we generate images such that the latent space of the model in the penultimate layer is equal to 0. We observe that this image varies vastly depending on the model, effectively showing that the zero of semantic information heavily depends on the model under study. More details can be found in Appendix~\ref{ap:extra_fv}.}
    \label{fig:baseline-fv-small}
\end{figure}

\vspace{1mm}The raison d'être of the baseline in the Insertion and Deletion estimation process is to progressively remove the semantic information that the explanation is highlighting. For this to be the case, a whole image containing only the baseline should lead the model to output logit values that are almost equal for every class -- i.e. there's no actual useful semantic information on which to predict.
We test this property on some of the current baselines to better understand what is happening at the beginning of the Insertion and at the end of the Deletion curves. We observe that the model confidently predicts classes for most of the baselines, meaning that it still sees evidence for some classes in these heavily perturbed images (See Figure~\ref{fig:baseline-classif} in Appendix~\ref{ap:baseline_classif} for additional details).

\paragraph{Why baselines should be In-Distribution (ID) while also being image-independent:}

As discussed previously, one of the desiderata for a baseline for reliable faithfulness metrics is that the resulting images not be Out-of-Distribution from the point of view of the model under study. We have shown that the static baselines in the literature are not really quite ID, with OOD images accounting for 80\% of the Insertion score (see Figure~\ref{fig:insertion_deletion_tradeoff}).

With the rise of increasingly convincing generative models~\cite{ho2020denoising,song2020score,boutin2023diffusion}, it could be enticing to use such models to replace the highlighted parts of the image with non-informative or background segments~\cite{blucher2024decoupling}. This can easily be illustrated using Ribeiro et al.'s famous Huskies vs. Wolves problem~\cite{ribeiro2016i}: suppose our model predicts based only on the presence of snow on the background, then the generative model will replace the snowy background with snowy background -- i.e. leave the important semantic information in the image -- and the canine with snowy background -- i.e. it will add even more important semantic information. It is clear that this means that this sort of baseline only seems appropriate when the model does not rely on background information to make a decision -- i.e. actually focuses on the canine -- but this should not be a requirement for an objective \textbf{explainability} metric.

In addition, if the baseline is conditionally generated on a given image, the perturbed image will still contain semantic information about it, leading to sub-optimal information removal. An example of this can be found in Appendix~\ref{ap:extran_diffusion_example}

\subsection{Why baselines should be model-dependent}

\begin{wrapfigure}{r}{0.5\textwidth}
    \centering
    \includegraphics[width=0.48\textwidth]{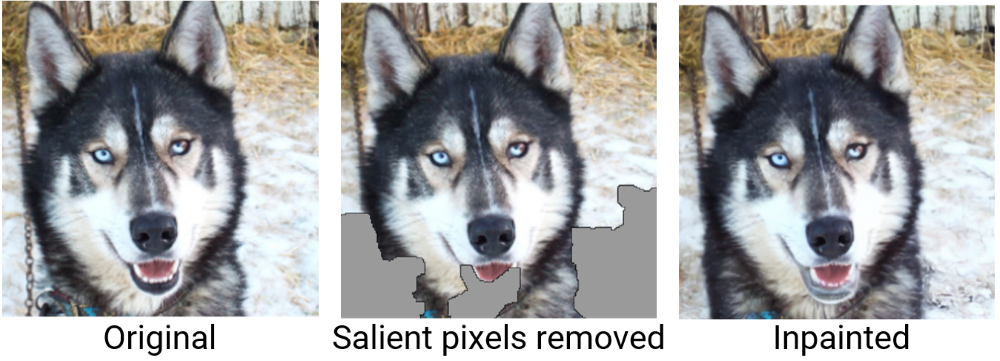}
    \caption{\textbf{Baselines should be image-independent.} For an image of a Husky \cite{ribeiro2016i}(left) that the model classifies as such based on the presence of snow in the background, during Deletion, the baseline will target the background first (middle) and replace it with a reconstruction of the snowy background (right).}
    \label{fig:husky}
\end{wrapfigure}

In the literature, when different baselines were introduced, the idea they had in mind was to replace parts of a natural image with placeholders that appear empty to humans. It is quite natural for people to instinctively think that black or gray non-descript parts of an image are hiding information. However, neural networks have priors and have been trained on specific distributions, and as such, these "patches" actually do contain information (see Figure~\ref{fig:baseline-classif}). Hence, what would be considered an empty image for them will depend on the specific model under study, its priors, and its training distribution.

In particular, creating an image such that the target model outputs null activations at its head would fill the role of carrying zero semantic information for the model. As for the second property, if its activations were actually zero, then its deep 1NN score would be essentially bounded by the training sample with the smallest norm. Fig~\ref{fig:qualitative-paths} show that this corresponds to an interesting trade-off.

\vspace{-1mm}
\subsection{A Feature Visualization perspective}
\label{sec:baseline-fv}

Having established our two key objectives -- i.e., efficient information removal and a low OOD score -- and our three additional requirements, we can create a new baseline that adheres to these precepts. In particular, we will formulate the baseline as an optimization problem to solve using only our model, which will allow us to achieve a better trade-off than the rest of the baselines.
\vspace{-2mm}
\paragraph{The objective introduced in the previous section can be formulated as feature visualization.}
Originally introduced as a means to understand the feature extraction pipeline implemented by convolutional neural networks~\cite{saliency,nguyen2016synthesizing,nguyen2017plug}, these methods propose to optimize an image such that it maximizes a given criterion $\mathcal{L} : \X \to \R$ -- usually using some (intermediary) values inside the neural network (neurons, layers, logits, etc.). In essence, the objective is to find an image $\x^\star$ that verifies the condition:
\vspace{-1mm}
\begin{equation}
    \x^\star = \argmax_{\x \in \X} \mathcal{L}(\x) - \lambda \mathcal{R} (\x) \, ,
\end{equation}
where $\mathcal{R} : \mathcal{X} \to \mathbb{R}$ is a regularization term to guide the solutions towards more natural-looking images, and $\lambda$ is a hyperparameter to balance the main criterion and the regularizer.

Several strategies have been proposed in the literature to try to render feature visualizations that follow a structure that resembles that of natural images, be it through data augmentation during the optimization process, through filtering techniques~\cite{tyka2016class} or by leveraging properties in the Fourier domain~\cite{olah2017feature}. The latest addition is probably MaCo~\cite{fel2023unlocking}, which improves the realism of model-free feature visualizations and allows them to scale up to the largest vision models, and does so by constraining the frequency magnitude to be equal to that of the mean of the dataset and only optimize the phase of the image. In practice, we will use this technique to generate our baselines using the formulation below. \\
Instead of generating an image that maximizes the activations of certain parts of the model, we will use feature visualization to perform a task similar to feature inversion (as described in~\cite{fel2023unlocking}): we will choose the criterion $\mathcal{L}_v$ so that the model's last layer's input is zero -- \textit{i.e.} the information being fed into the last layer of the model is zero and only the bias of this linear layer produces the non-zero logits for the model. In addition, we will perform the optimization operation in the Fourier domain as it provides an efficient pre-conditioning for generating more natural images. Combining all this, we obtain the following objective:

\vspace{-1.5mm}
\begin{equation}
    \bm{\varphi}^{\star} = \argmin_{\bm{\varphi} \in \R^d} \| (\f_\ell \circ \mathcal{F}^{-1})(\bm{r} e^{i \bm{\varphi}})  \|_2^2 , ~~~ \text{and} ~~~ \x^{\star} = \mathcal{F}^{-1}(\bm{r} e^{i \bm{\varphi}^{\star}})
    \label{eq:baseline-fv}
\end{equation}
where $\f_\ell$ represents the model up to the penultime layer, and $\mathcal{F}^{-1}$ is the inverse Fourier transform. This corresponds to creating an image containing zero information from the model's point of view, which is also the definition of the baseline value in faithfulness metrics. \\
We can expect this generated image to not induce bias, as, by construction, it will bring the model's activations to zero and to be ID to the model, as it was generated using the information in the networks' weights. It is interesting to note that every model will have learned a specific set of features during training, and finding a baseline that corresponds to zero information for every model is impossible.

\section{Experimental setup}\label{sec:results}

To gather all the different insights, we performed extensive tests on more than 20,000 explanations generated for Computer Vision models (ResNet50V2~\cite{he2016deep}, ViT~\cite{dosovitskiy2020image}) and benchmarked using Deletion and Insertion~\cite{petsiuk2018rise} with 11 different baselines~\ref{apx:baselines}. In particular, we monitored the scores of OOD and information removal during the computation of these two metrics and at the end of the process -- i.e. when the input image has been completely replaced by the baseline. For these experiments, the \textbf{Xplique} library~\cite{fel2022xplique} was used, as it provides the necessary blocks, and we generated the explanations using the default settings for all attribution methods. Additional results on a Vision Transformer~\cite{dosovitskiy2020image} are presented in Appendix~\ref{apx:vit-results}.\\
By computing the OOD score of the image as it gets progressively replaced by the baseline, we can determine the percentage of the Insertion and Deletion metrics that are calculated on high OOD data and, thus, unreliable. In Figure~\ref{fig:insertion_deletion_tradeoff}, we observe that on average, 80\% of the Insertion score is actually estimated on data that is Out-of-Distribution, compared to Deletion, which exhibits the opposite behavior (most of the AUC is computed on mostly In-Distribution data). This enables us to conclude that Insertion is mainly governed by the model's manner of dealing with OOD images, while Deletion shows it could be reliable, provided the baseline is able to remove information with a high degree of precision. \\
We also calculate the information removal of each baseline, which allows us to compare the different baselines on two axes. In Figure~\ref{fig:qualitative-paths} (right), we observe how each baseline progressively removes information from an image and how the target gets increasingly difficult to recognize, with some baselines visually destroying more information than others. Further, it is possible to analyze this quantitatively, and we observe in Figure~\ref{fig:qualitative-paths} (left) that, for the static baselines in the literature, the more efficient the information removal, the more OOD the inputs become for the model. In contrast, we observe that our model-dependent baseline provides a better trade-off between these scores: it suppresses information more efficiently while staying closer to the training distribution. \\
The findings from our tests highlight the pivotal role that the choice of baselines plays in evaluating the faithfulness of attribution methods. By employing baselines that offer a more balanced trade-off between information removal and plausibility, the research community can achieve a more accurate and reliable measurement of model faithfulness. This advancement in baseline selection steers us toward developing more robust and transparent XAI systems. Ultimately, this progress promises to refine our methodologies and improve the credibility of the insights derived from XAI evaluations.

\section{Discussions, Limitations and Future work}

In this work, we investigated some limitations on XAI faithfulness metrics by highlighting some glaring issues in their computation. Namely, Insertion and Deletion, two of the most popular metrics in the field, call for a baseline value, which we show heavily impacts the results and rankings of the attribution methods we tested. We chose to study two aspects to better characterize how such a baseline affects the results: information removal and OOD score. We introduce a principled baseline that is model-specific and show that it provides a better trade-off on the aforementioned aspects. Besides metrics, some black-box attribution methods~\cite{novello2022making,fel2021sobol,zeiler2014visualizing,ribeiro2016i} also rely on a baseline to mask parts of the input and, hence, can benefit from the insights of this paper.

By searching for maximal information removal, we have concluded that baselines need to be image-independent. However, this will inexorably lead to the trade-off we observe in Figure~\ref{fig:qualitative-paths}. In their current state, dynamic baselines are not trustworthy enough to be employed in the evaluation of attribution methods, but more advanced, reliable conditional baselines could be devised in the future.

Recent work~\cite{geirhos2023don} has demonstrated that due to the non-convex nature of feature visualization optimization, explanations from this technique lack complete reliability since the optimizer cannot be guaranteed to find the optimal solution. However, this limitation has not prevented our model-dependent baseline from offering a better trade-off, and its performance should evolve with future advances in the field.

Finally, we invite readers to explore the appendices for a comprehensive set of additional results. In particular, we present the experimental setup in detail~\ref{sec:results}, some additional examples of Feature Visualization baselines~\ref{ap:extra_fv}, additional information, closed forms, and the proofs of our theoretical results~\ref{apx:toy-proofs}, our experiments on the SRG metric~\ref{apx:unstable-srg}, our experiments on Vision Transformers~\ref{apx:vit-results}, a description of the baselines~\ref{apx:baselines}, some additional curves on the evolution of the OOD score for Deletion~\ref{apx:ood-curves}, a simple experiment showing that the baselines still contain information from the model's point of view~\ref{ap:baseline_classif}, and an additional example of the issues of inpainting diffusion models as baselines~\ref{ap:extran_diffusion_example}.

\section{Acknowledgements}
This work was carried out within the DEEL project,\footnote{\url{https://www.deel.ai/}} which is part of IRT Saint Exupéry and the ANITI AI cluster. The authors acknowledge the financial support from DEEL's Industrial and Academic Members and the France 2030 program – Grant agreements n°ANR-10-AIRT-01 and n°ANR-23-IACL-0002. Additional support provided by ONR grant N00014-19-1-2029 and NSF grant IIS-1912280. Support for computing hardware provided by Google via the TensorFlow Research Cloud (TFRC) program and by the Center for Computation and Visualization (CCV) at Brown University (NIH grant S10OD025181).

\bibliography{main}
\bibliographystyle{plain}

%%%%%%%%%%%%%%%%%%%%%%%%%%%%%%%%%%%%%%%%%%%%%%%%%%%%%%%%%%%%

\appendix

%%%%%%%%%%%%%%%%%%%%%%%%%%%%%%%%%%%%%%%%%%%%%%%%%%%%%%%%%%%%

\newpage

\section{Examples of Feature Visualization Baselines}\label{ap:extra_fv}

Basing ourselves on the MaCo~\cite{fel2023unlocking} implementation in Xplique~\cite{fel2022xplique}, we adjust the optimization objective as follows:

\begin{equation}
    \bm{\varphi}^\star = \argmin_{\bm{\varphi} \in \R^d} \| (\f_\ell \circ \mathcal{F}^{-1})(\bm{r} e^{i \bm{\varphi}})  \|_2^2 , ~~~ \text{and} ~~~ \x^\star = \mathcal{F}^{-1}(\bm{r} e^{i \bm{\varphi}^{\star}})
    % \label{eq:baseline-fv}
\end{equation}

for $\f_\ell$ the model's activations at layer $\ell$. We generate our baselines with $\ell$ the penultimate layer -- i.e. just before the last layer.

We create baselines for some models, and we showcase them in Figure~\ref{fig:baselines_fv_apx}. We observe that the architecture, the dataset it was trained on, and the data augmentation schemes have a substantial impact on what the model considers to be an image without any semantic information.

\begin{figure}[H]
\centering
\includegraphics[width=\textwidth]{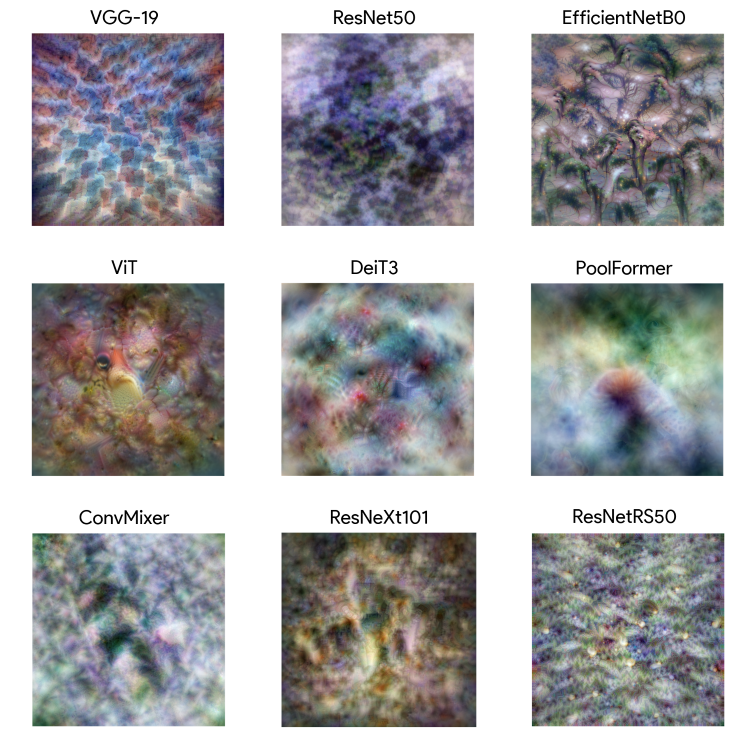}
\caption{\textbf{Model-dependent baselines generated using Feature Visualization.} For nine different models trained or fine-tuned on ImageNet1k~\cite{krizhevsky2012imagenet}, we generate some model-dependent baselines with the procedure described in Section~\ref{sec:baseline-fv}.}
\label{fig:baselines_fv_apx}
\end{figure}

\section{Choosing the best method is baseline-dependent}\label{apx:toy-proofs}

As mentioned previously, we can demonstrate that selecting a baseline is equivalent to choosing the best method for the same toy model. We will consider a linear toy model. We will show that in the case of the zero baseline, any method that gives a ranking of pixels different from the ranking provided by gradient-input~\cite{shrikumar2017learning} will have a lower score.

\paragraph{Notations:} \label{ap:notations_proof} Let the toy model $\f \in \mathcal{V} \subseteq \mathfrak{F}$ that belongs to the class of linear functions $\mathcal{V} = \{\x \mapsto \x\w + b \mid \w \in \mathbb{R}^d, b \in \R \}$ with weights $\w \in \R^d$ and bias $b \in \R$. Moreover, we recall that $\u$ denotes the sequence of elements to be removed and placed into a baseline state. $ \u $ then has $ d $ elements, we denote $ \u_{1:i} $ the i first elements of the sequence and $ \neg \u $ the complement of $ \u $ on $ \U $. A baseline function $\baseline : (\X \times \U) \to \X$ is a function that takes a specific input $\x$, a subset of indices to replace $\u \in \U \subseteq \{1, \ldots, d\}$ using a specific scheme, such as zero or adding noise.

We first recall the closed-form solution of various attribution methods in the linear case before formally introducing the Deletion and Insertion metrics.

\paragraph{Closed-form solution for Attribution methods.}

In the linear case, several attribution methods, $\explainer$, have closed forms, which allows us to derive closed forms for the Deletion and Insertion metrics. Specifically, we will focus on six methods: Saliency~\cite{saliency}, Gradient-Input~\cite{ancona2017better}, SmoothGrad~\cite{smilkov2017smoothgrad}, Integrated Gradients~\cite{IntegratedGradients}, Occlusion~\cite{zeiler2014visualizing}, and RISE~\cite{petsiuk2018rise}. For reference, some of these closed forms can be found in \cite{ancona2017better} and \cite{fel2024holistic}.

We start by deriving the closed form of Saliency (Sa):

\begin{flalign*}
\explainer^{(Sa)}(\x, \f) 
&= \nabla_{\x} \f(\x) && \\
&= \nabla_{\x} (\x \w + b) && \\
&= \w.
\end{flalign*}

This simply returns the model's weights. Concerning Gradient-Input (Gi), following the same strategy, we naturally have:

\begin{flalign*}
\explainer^{(Gi)}(\x, \f) 
&= \nabla_{\x} \f(\x) \odot \x && \\
&= \nabla_{\x} (\x \w + b) \odot \x && \\
&= \w \odot \x.
\end{flalign*}

And we can find similar results for Integrated Gradients (Ig):

\begin{flalign*} % TODO fix \dif that isn't defined
\explainer^{(Ig)}(\x, \f)
&= (\x - \x_0) \odot \int_0^1 \nabla_{\x} \f(\x_0 + \alpha (\x - \x_0)) \diff \alpha && \\
&= \x \odot \int_0^1 \nabla_{\x} (\alpha \x \w + b + (\alpha-1) \x_0 \w) \diff \alpha && \\
&= \x \odot \int_0^1 \alpha \w \diff \alpha && \\
&= \x \odot \w \left[\frac{1}{2} \alpha^2 \right]_0^1 && \\
&= \frac{1}{2} \x \odot \w.
\end{flalign*}

Smoothgrad (Sg) ends up being strictly equal to Saliency:

\begin{flalign*}
\explainer^{(Sg)}(\x, \f)
&= \underset{\bm{\delta} \sim \mathcal{N}(0, \mathbf{I}\sigma)}{\mathbb{E}}(\nabla_{\x} \f(\x + \bm{\delta})) && \\
&= \underset{\bm{\delta} \sim \mathcal{N}(0, \mathbf{I}\sigma)}{\mathbb{E}}(\nabla_{\x}((\x + \bm{\delta}) \w + b)) && \\
&= \underset{\bm{\delta} \sim \mathcal{N}(0, \mathbf{I}\sigma)}{\mathbb{E}}(\w) && \\
&= \w.
\end{flalign*}

And Occlusion (Oc) to Gradient-Input:

\begin{flalign*}
\explainer^{(Oc)}_i(\x, \f)
&= \f(\x) - \f(\x_{[i = x_0]}) && \\
&= (\x \w + b) - (\x_{[i = x_0]} \w + b) && \\
&= (\sum_{j} x_j \w_j) - (\sum_{j \neq i} x_j \w_j) && \\
&= x_i \w_i.
\end{flalign*}

Finally, for RISE, we reach the conclusion that RISE is proportional to Gradient-Input, with a constant added to each of the scores for each component, the constant being $b + \frac{1}{2} (\x \w)$.

\begin{flalign*}
\explainer^{(RI)}_i(\x, \f)
&= \mathbb{E}(\f(\x \odot \bm{m}) | \bm{m}_i = 1) && \\
&= \mathbb{E}((\x \odot \bm{m}) \w + b | \bm{m}_i = 1) && \\
&= b + \sum_{j \neq i} x_j \mathbb{E}(\bm{m}_j) \w_j + x_i \w_i && \\
&= b + \frac{1}{2} (\x \w + x_i \w_i).
\end{flalign*}

\paragraph{Metrics.}

We now recall the original Deletion score, defined as the sum of the individual steps of deletion: 

\begin{definition}[Deletion] Deletion is defined as the Area Under the Curve (AUC) for the model's logit as the most salient pixels of the image get progressively replaced by the baseline. This corresponds to calculating the AUC of the curve 
$$
\Deletion(\x, \u) = \sum_{i=1}^d \deletion_i(\x, \u)_i.
$$
With the $i$-th deletion step defined as the score when only the top $i$ most important features are put to a baseline state:
$$
\deletion(\x, u)_i = \f (\baseline(\x, u_{1:i}))
$$
Where $\u$ is the vector containing indices of $x$ by order of importance as defined by the attribution method under study. Logically, the more faithful the method is, the lower the AUC.
\end{definition}

\begin{definition}[Insertion] Similarly, the Insertion metric is defined as the sum of the $d$ insertion steps:
$$
\Insertion(\x, \u) = \sum_{i=1}^d \insertion(\x, \u)_i.
$$
With the $i$-th insertion step defined as the score when only the top $i$ most important features are \textbf{not} put to a baseline state:
$$
\insertion(\x, \u)_i = \f(\Pi(\x,\neg \u_{1:i})).
$$
\end{definition}

\paragraph{Optimal subset.} We are now interested in finding the optimal subset $\u$ such that, for example, it minimizes the Deletion score:
\begin{flalign}
\nonumber
\argmin_{\u} \Deletion(\x, \u) &= \argmin_{\u} \sum_{i=1}^{d} \deletion(\x,\u)_i  && \\\nonumber
&= \argmin_{\u} \quad 
(\baseline(\x, \u_1)\w  + b) + \ldots + 
(\baseline(\x, \bm{\u})\w  + b)
&& \\ \nonumber
&= \argmin_{\u} \quad 
\baseline(\x, \u_1)\w + \ldots + \baseline(\x, \u)\w
\end{flalign}
Moreover, when we instantiate popular baselines such as inpainting or uniform noise, we can demonstrate that the optimal ordering of the most important variable, the construction of $\u$, is optimal for a different set of attribution methods.

\begin{proposition}[Gradient-Input attributions are optimal when $\baseline$ is zero.] When $\baseline(\x, \u)$ is the zero function that applies zero value to remove a feature:
$$
\baseline(\x, \u)_i = 
\begin{cases} 
0  & \text{if } i \in \u, \\
x_i & \text{otherwise}.
\end{cases}
$$
Then, for the linear model defined in \ref{ap:notations_proof}, the set of attributions that minimizes the Deletion score is Gradient-Input, Integrated Gradient, Rise, and Occlusion.
And more generally, all the attribution methods such that $\forall (i,j) ~~ x_i \cdot w_i \geq x_j \cdot w_j \implies \explanation(\x)_i > \explanation(\x)_j$ are optimal.
\end{proposition}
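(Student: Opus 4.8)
The plan is to put the Deletion score into closed form as a function of the removal order $\u$, reduce its minimization to a single application of the rearrangement inequality, and then read off which attribution methods realize the minimizing order. First I would fix a permutation $\u = (u_1, \dots, u_d)$ of $\{1,\dots,d\}$ and evaluate a single deletion step. Because the zero baseline simply drops the contributions of the already-removed coordinates, the features still present at step $i$ are exactly $\{u_{i+1}, \dots, u_d\}$, so
$$
\deletion(\x, \u)_i = \f(\baseline(\x, \u_{1:i})) = b + \sum_{k=i+1}^{d} x_{u_k} w_{u_k}.
$$

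Next I would sum over the $d$ steps and exchange the order of summation. The bias contributes a constant $d\,b$ independent of $\u$, and each term $x_{u_k} w_{u_k}$ is counted once for every index $i < k$, i.e. $(k-1)$ times, which gives
$$
\Deletion(\x, \u) = d\,b + \sum_{k=1}^{d} (k-1)\, x_{u_k} w_{u_k}.
$$
Writing $c_j := x_j w_j$, minimizing over $\u$ is therefore precisely the problem $\argmin_{\u} \sum_{k=1}^{d} (k-1)\, c_{u_k}$, the pairing of the fixed increasing weight sequence $0, 1, \dots, d-1$ with a permutation of the values $c_1, \dots, c_d$. At this point the rearrangement inequality settles everything: against an increasing weight sequence the sum of products is minimized exactly when the paired values are in decreasing order, i.e. when $c_{u_1} \ge c_{u_2} \ge \dots \ge c_{u_d}$. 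Hence the Deletion-optimal order removes coordinates in decreasing order of $x_j w_j$.

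Finally I would match this order against the closed forms already derived earlier in the appendix: Gradient-Input assigns score $w_j x_j$, Integrated Gradients $\tfrac12 w_j x_j$, Occlusion $w_j x_j$, and RISE $\tfrac12 w_j x_j$ plus a coordinate-independent constant $b + \tfrac12 \x\w$. All four are monotone (in fact affine with positive slope) in $x_j w_j$, so they induce the same decreasing order and are optimal, and more generally any method whose scores are monotone nondecreasing in $x_j w_j$ attains the minimum, which is the stated condition $x_i w_i \ge x_j w_j \implies \explanation(\x)_i \ge \explanation(\x)_j$.

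The main obstacle I anticipate is not the inequality itself but the careful handling of ties and the precise sense of optimality. When several $c_j$ coincide, every order among them yields the same Deletion score, so optimality is a property of the induced ordering rather than of a unique permutation; this is exactly what justifies phrasing the conclusion as a monotonicity condition instead of demanding literal agreement with Gradient-Input, and what lets RISE count as optimal despite its additive and multiplicative constants, since neither alters the order. A secondary point worth checking is the sign convention for ``importance'': the argument shows that removing the largest $x_j w_j$ first keeps the logit low across the most steps, so the optimal attribution must rank high $x_j w_j$ as most important, consistent with how Deletion is scored.
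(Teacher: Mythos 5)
Your proof is correct and takes essentially the same route as the paper's: put the Deletion score in closed form as a permutation-weighted sum of the values $x_j w_j$, apply the rearrangement inequality to identify the optimal removal order (largest $x_j w_j$ first), and then read off that Gradient-Input, Integrated Gradients, Occlusion, and RISE all induce this order. If anything, your bookkeeping is cleaner: by counting the surviving terms you get increasing weights $k-1$ directly, whereas the paper counts the removed terms with weights $d-i+1$ and in doing so drops a negation (its displayed $\argmin_{\u} \sum_{i}(d-i+1)\,x_{\u_i}w_{\u_i}$ should be an $\argmax$), a slip that does not affect its conclusion and that your derivation avoids.
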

\begin{proof}

\begin{flalign}
\nonumber
\argmin_{u} \Deletion(\x, \u)
&= \argmin_{\u} \baseline(\x, \u_1)\w + \ldots + \baseline(\x, \u)\w
&& \\ \nonumber
&= \argmin_{\u} d \cdot (x_{\u_1} w_{\u_1}) + (d-1) \cdot (x_{\u_2} w_{\u_2}) + \ldots 1 \cdot (x_{\u_d} w_{\u_d}) && \\ \nonumber
&= \argmin_{\u} \sum_{i=1}^{d} (d-i+1) \cdot (x_{\u_i} w_{\u_i})
\end{flalign}

We will recall the Rearrangement inequality~\cite{day1972rearrangement} which states that for every choice of \textit{real numbers}
\[
x_1 \leq \cdots \leq x_n \quad \text{and} \quad y_1 \leq \cdots \leq y_n
\]
and every \textit{permutation} $\sigma$ of the numbers $1, 2, \ldots, n$ we have
\[
x_1 y_n + \cdots + x_n y_1 \leq x_1 y_{\sigma(1)} + \cdots + x_n y_{\sigma(n)} \leq x_1 y_1 + \cdots + x_n y_n.
\]
We can then conclude that $\Deletion(\x, \u)$ is minimized if and only if $x_i w_i$ are ordered : $ \forall (i,j) : i < j, x_{\u_i} w_{\u_i} \geq x_{\u_j} w_{\u_j}$.
This condition is only satisfied for attribution methods $\explanation$ that are of the form of Gradient-input $\explanation(\x) = \x \cdot \w$, thus, in the case of the linear model: Gradient-Input~\cite{shrikumar2017learning}, Integrated Gradient~\cite{IntegratedGradients}, Occlusion~\cite{zeiler2014visualizing} and RISE~\cite{petsiuk2018rise}.
\end{proof}

\begin{proposition}[Saliency attributions are optimal when $\baseline$ is Uniform noise.] When $\baseline(\x, \u)$ adds uniform noise to perturb a feature:
$$
\baseline(\x, \u)_i = 
\begin{cases} 
x_i + \xi & \text{if } i \in \u, \\
x_i & \text{otherwise}.
\end{cases}
$$
With $\xi \sim \text{Uniform}([0, 1])$.
Then, the set of attributions that minimize the Deletion score is Saliency and SmoothGrad.
And more generally, all the attribution methods such that $\forall (i,j) ~~ w_i \geq w_j \implies \explanation(\x)_i > \explanation(\x)_j$ are optimal.
\end{proposition}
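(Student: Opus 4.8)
The plan is to mirror the argument used for the zero baseline in Proposition~\ref{prop:optimal_inpainting}, the only change being that the per-feature marginal effect on the logit is now governed by $w_j$ alone rather than by $x_j w_j$. Since the baseline is now stochastic, the object to analyse is the expected Deletion score $\mathbb{E}_{\xi}[\Deletion(\x,\u)]$, and I would first push the expectation through the sum over deletion steps using linearity of both the expectation and the linear model $\f$.

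First I would evaluate a single deletion step. Writing the noise added to coordinate $j$ as $\xi_j \sim \text{Uniform}([0,1])$, independent across coordinates with $\mathbb{E}[\xi_j]=\tfrac12$, linearity of $\f$ gives
$$\mathbb{E}_{\xi}\big[\f(\baseline(\x,\u_{1:i}))\big] = \f(\x) + \tfrac12 \sum_{j \in \u_{1:i}} w_j.$$
The crucial observation is that the term distinguishing different orderings, $\tfrac12\sum_{j\in\u_{1:i}} w_j$, depends only on the weights $w_j$ of the removed coordinates and not on the input values $x_j$ -- this is exactly what replaces the $x_j w_j$ of the zero-baseline analysis and decouples the optimum from $\x$. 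Note also that this expression is insensitive to whether the noise on already-removed coordinates is resampled between steps, since only the per-step mean $\tfrac12$ enters.

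Next I would sum over the $d$ deletion steps. Because coordinate $\u_k$ belongs to $\u_{1:i}$ precisely when $i \geq k$, collecting terms yields
$$\mathbb{E}_{\xi}[\Deletion(\x,\u)] = d\,\f(\x) + \tfrac12 \sum_{k=1}^{d} (d-k+1)\, w_{\u_k},$$
where the first summand is constant in $\u$. Minimising over $\u$ therefore reduces to minimising $\sum_{k}(d-k+1)\,w_{\u_k}$, which pairs the monotone coefficient sequence $(d-k+1)$ with a permutation of the weights. Applying the Rearrangement inequality exactly as in Proposition~\ref{prop:optimal_inpainting} pins the optimum down to the arrangement in which the $w_{\u_k}$ are sorted monotonically, i.e. the ordering of coordinates induced by the weights $w_j$ themselves.

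Finally I would identify the attribution methods realising this ordering. From the closed forms derived above, Saliency and SmoothGrad both return $\explanation(\x)=\w$, so their induced ranking is exactly the weight ranking and they attain the optimum; by contrast Gradient-Input, Integrated Gradients, Occlusion and RISE rank by $x_j w_j$, and are thus generically suboptimal here, which is precisely the contrast with Proposition~\ref{prop:optimal_inpainting}. The general claim then follows: any method with $w_i \geq w_j \implies \explanation(\x)_i > \explanation(\x)_j$ produces a ranking consistent with sorting by $\w$, hence an optimal $\u$; the strict inequality guarantees consistency whenever the weights differ, while ties in $\w$ are harmless because any tie-breaking leaves $\sum_{k}(d-k+1)\, w_{\u_k}$ unchanged. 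I expect the main obstacle to be bookkeeping rather than conceptual: correctly tracking the step-coefficients and the orientation of the Rearrangement inequality so that the weight-sorted ordering (Saliency) comes out as the minimiser, and being careful that the equality case of the inequality is exactly the set of admissible tie-breakings, which is what upgrades ``an optimal ordering exists'' to ``these methods are optimal.''
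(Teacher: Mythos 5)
Your plan reproduces the paper's own proof step for step --- same expectation-through-linearity, same per-step formula $\f(\x) + \tfrac12\sum_{j\in\u_{1:i}} w_j$, same collection of coefficients $(d-k+1)$, same appeal to the Rearrangement inequality --- but it stumbles on, and fails to catch, the one step you yourself flagged as the crux: the orientation of that inequality. The coefficients $d-k+1$ are \emph{decreasing} in $k$ (the first-deleted coordinate is perturbed in all $d$ steps, the last in only one). Hence the minimum of $\sum_{k}(d-k+1)\,w_{\u_k}$ over orderings is attained when the $w_{\u_k}$ are sorted in \emph{ascending} order --- most negative weights perturbed first --- while Saliency's ranking (descending $w$) pairs the largest weights with the largest coefficients and therefore attains the \emph{maximum}. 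Concretely, take $d=2$, $\w=(1,2)$, $\x=(1,1)$, $b=0$: Saliency deletes coordinate $2$ first, giving expected Deletion $(3+1)+(3+\tfrac{3}{2}) = 8.5$, whereas the reverse ordering gives $(3+\tfrac12)+(3+\tfrac32)=8$. So under this formalization the descending-weight ordering is not the minimizer, and your concluding identification of Saliency/SmoothGrad as optimal does not follow from the displayed sum.

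The reason the ``same argument as Proposition~\ref{prop:optimal_inpainting}'' transfer fails is a sign asymmetry between the two baselines. With the zero baseline, deleting coordinate $j$ changes the logit by $-x_j w_j$, so minimizing Deletion means \emph{maximizing} $\sum_k (d-k+1)\,x_{\u_k}w_{\u_k}$, and there the similarly-ordered (descending) arrangement is indeed the right one. With additive $\mathrm{Uniform}([0,1])$ noise the perturbation \emph{adds} $+\tfrac12 w_j$ in expectation, no sign flip occurs, and the optimal direction reverses: the ordering minimizing expected Deletion ranks coordinates by $-w_j$, i.e.\ ``anti-Saliency'', and more generally the optimal methods are those satisfying $w_i \ge w_j \implies \explanation(\x)_i < \explanation(\x)_j$. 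To be fair to you, the paper's own appendix proof makes exactly the same misapplication (it writes the argmin of the positively-weighted sum and then invokes the maximum side of the inequality), so you have faithfully reconstructed the published argument, flaw included; but a careful execution of your plan would have revealed that the statement, as formalized, needs a sign correction (e.g.\ noise with negative mean, a subtractive perturbation, or a reversed optimality claim) before the rearrangement step can close it.
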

\begin{proof}

\begin{flalign}
\nonumber
\argmin_{\u} \Deletion(\x, \u)
&= \argmin_{\u} \E\big(\baseline(\x, \u_1)\w + \ldots + \baseline(\x, \u)\w)
\big) && \\ \nonumber
&= \argmin_{\u}\E \big( \x\w + \xi w_{\u_1} + \x\w + \xi w_{\u_1} + \xi w_{\u_2} + \ldots + \x\w + \xi \w  \big) && \\ \nonumber
&= \argmin_{\u}\E \big( d \cdot \xi w_{\u_1} + (d-1) \cdot \xi w_{\u_2} + \ldots + 1 \cdot \xi w_{\u_d}   \big) && \\ \nonumber
&= \argmin_{\u}\E \big( \sum_{i=1}^{d} (d-i-1) \cdot (\xi w_{\u_i})
\big) && \\ \nonumber
&= \argmin_{\u} \sum_{i=1}^{d} (d-i-1) \cdot (\E(\xi) w_{\u_i}) && \\ \nonumber
&= \argmin_{\u} \sum_{i=1}^{d} (d-i-1) \cdot (\frac{1}{2} w_{\u_i}) && \\ \nonumber
&= \argmin_{\u} \sum_{i=1}^{d} (d-i-1) \cdot w_{\u_i} && 
\end{flalign}

which, by the Rearrangement inequality~\cite{day1972rearrangement}, is minimized if and only if the $w_i$ are ordered : $ \forall (i,j) : i < j, w_{\u_i} \geq w_{\u_j}$.
This condition is only satisfied for attribution methods $\explanation$ that are of the form of Saliency $\explanation(\x) = \w$, thus, in the case of the linear model: Saliency and SmoothGrad.
\end{proof}

\begin{corollary}[Results holds for Insertion]
 For the linear model defined in \ref{ap:notations_proof}, the optimal methods for the Deletion metric are also optimal for the Insertion metric. This holds for both when $\baseline(\x, \u)$ is the zero baseline (resp. adding uniform noise) function.
\end{corollary}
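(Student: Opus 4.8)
The plan is to avoid redoing the rearrangement argument for Insertion and instead exploit an exact \emph{complementarity} between the two metrics: for each fixed ordering $\u$, the $i$-th deletion and insertion steps will sum to a quantity that depends on neither $i$ nor $\u$, so that $\Insertion(\x,\u)$ is an affine, strictly decreasing function of $\Deletion(\x,\u)$. Once this is established, the minimizers of Deletion are precisely the maximizers of Insertion (recall that Insertion is to be made \emph{large}, whereas Deletion is to be made \emph{small}), and the conclusion transfers verbatim from the two propositions already proved.

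For the zero baseline, I would first compute the insertion step in closed form. Keeping the top-$i$ features and zeroing out their complement gives $\insertion(\x,\u)_i = \f(\baseline(\x,\neg\u_{1:i})) = b + \sum_{k=1}^{i} x_{\u_k} w_{\u_k}$, while the deletion step computed earlier is $\deletion(\x,\u)_i = \f(\baseline(\x,\u_{1:i})) = \f(\x) - \sum_{k=1}^{i} x_{\u_k} w_{\u_k}$. Adding these, the $\u$-dependent sums cancel and $\deletion(\x,\u)_i + \insertion(\x,\u)_i = \f(\x) + b$ for every $i$. Summing over $i = 1, \dots, d$ then yields $\Insertion(\x,\u) = d\,(\f(\x)+b) - \Deletion(\x,\u)$, which is exactly the desired affine relationship.

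For the uniform-noise baseline the same identity holds in expectation. Here the insertion process starts from the fully perturbed image and reveals the true features, so at step $i$ the complementary indices $\u_{i+1}, \dots, \u_d$ still carry noise, giving $\E[\insertion(\x,\u)_i] = \f(\x) + \tfrac{1}{2}\sum_{k=i+1}^{d} w_{\u_k}$, to be compared with $\E[\deletion(\x,\u)_i] = \f(\x) + \tfrac{1}{2}\sum_{k=1}^{i} w_{\u_k}$. Their sum is $2\f(\x) + \tfrac{1}{2}\sum_{k=1}^{d} w_{\u_k}$, which is independent of $i$ and of the ordering $\u$ precisely because $\E[\xi] = \tfrac{1}{2}$ is the same for every index; summing over $i$ once more makes $\E[\Insertion]$ an affine, decreasing function of $\E[\Deletion]$.

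Finally, since in both cases $\Insertion$ equals a constant minus $\Deletion$, the ordering $\u$ that minimizes Deletion is exactly the one that maximizes Insertion, so the optimal orderings---and therefore the optimal attribution methods identified in the two propositions (Gradient-Input, Integrated Gradients, Occlusion and RISE for the zero baseline; Saliency and SmoothGrad for uniform noise)---coincide. I expect the only real subtlety to be bookkeeping rather than difficulty: one must keep the optimization directions straight, since the sign flip turns a minimization into a maximization, and in the noise case one must observe that the complementarity is only an identity in expectation, which suffices exactly because the noise mean is index-independent and hence contributes an $\u$-invariant constant.
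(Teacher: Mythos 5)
Your proof is correct, but it takes a genuinely different route from the paper's. The paper proves the corollary by unrolling the Insertion sum exactly as in the two propositions, $\argmax_{\u} \Insertion(\x,\u) = \argmax_{\u}\, \f(\baseline(\x,\neg \u_{1:1})) + \dots + \f(\baseline(\x,\neg \u_{1:d}))$, and then invoking the \emph{other side} (the maximizing side) of the rearrangement inequality to recover the same ordering condition on $x_i w_i$ (resp.\ $w_i$). You instead bypass rearrangement entirely by establishing an exact complementarity: since for a linear $\f$ and a replacement-type baseline one has $\f(\baseline(\x,S)) + \f(\baseline(\x,\neg S)) = \f(\x) + \f(\baseline(\x,\{1,\dots,d\}))$ for every subset $S$ (deterministically for the zero baseline, in expectation for uniform noise because $\E[\xi]=\tfrac{1}{2}$ contributes an ordering-invariant constant), summing over steps gives $\Insertion(\x,\u) = C - \Deletion(\x,\u)$ with $C$ independent of $\u$, so Deletion-minimizers and Insertion-maximizers coincide by pure monotonicity. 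Both arguments are valid; what yours buys is that the corollary becomes baseline-agnostic within this linear setting (any baseline acting independently per index with ordering-invariant mean effect satisfies the identity, so there is nothing to re-derive case by case), and it cleanly isolates why the transfer happens -- conservation of the total logit mass across the deletion/insertion split -- rather than re-running the pairing argument. What the paper's route buys is uniformity of technique: the corollary reads as a two-line continuation of the propositions, and it does not require writing down the insertion steps in closed form. One small point in your favor on rigor: your closed forms for $\insertion(\x,\u)_i$ and $\deletion(\x,\u)_i$ make the optimization directions explicit (Insertion maximized, Deletion minimized), a bookkeeping detail the paper's terse proof leaves implicit.
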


\begin{proof}
\iffalse
The previous proof strategy can be unrolled with two main differences:
\begin{itemize}
    \item The optimal method must maximize the sum instead of minimizing it.
    \item The sequence is similar to the one for deletion, but instead of setting the top-k elements to a baseline state, it's the bottom-k that are removed.
\end{itemize}
\fi
The previous proof strategy can be unrolled, which leads to:
\[
\argmax_{u} \text{Insertion}(x, u) =  \argmax_u \baseline(x, \neg u) + \dots + \argmax_u \baseline(x, \neg u_1)
\]
In this context, we observe that we can apply the other side of the Rearrangement inequality, leading to the same conclusion.
\end{proof}

\section{The instability of SRG}\label{apx:unstable-srg}

Introduced in~\cite{blucher2024decoupling}, Symmetric Relevance Gain (SRG) is a metric that combines the scores of what they call Least Important First (LIF) and Most Important First (MIF) -- another naming convention for what we call Insertion and Deletion. In particular, the SRG was proposed as a solution to Insertion and Deletion's dependence on the baseline (or imputer, in their notation). In their paper, they demonstrate that they do achieve some stability, but their testbed is limited: the baselines on which they tested their techniques find themselves on the left-hand side of the Information removal/OOD score trade-off. In essence, the SRG is computed as follows:

\begin{equation}
SRG (\explainer) = LIF (\explainer) - MIF (\explainer) = Insertion (\explainer) - Deletion (\explainer) \, ,
\end{equation}
for $\explainer$ an attribution method.

\begin{figure}[H]
\centering
\includegraphics[width=\textwidth]{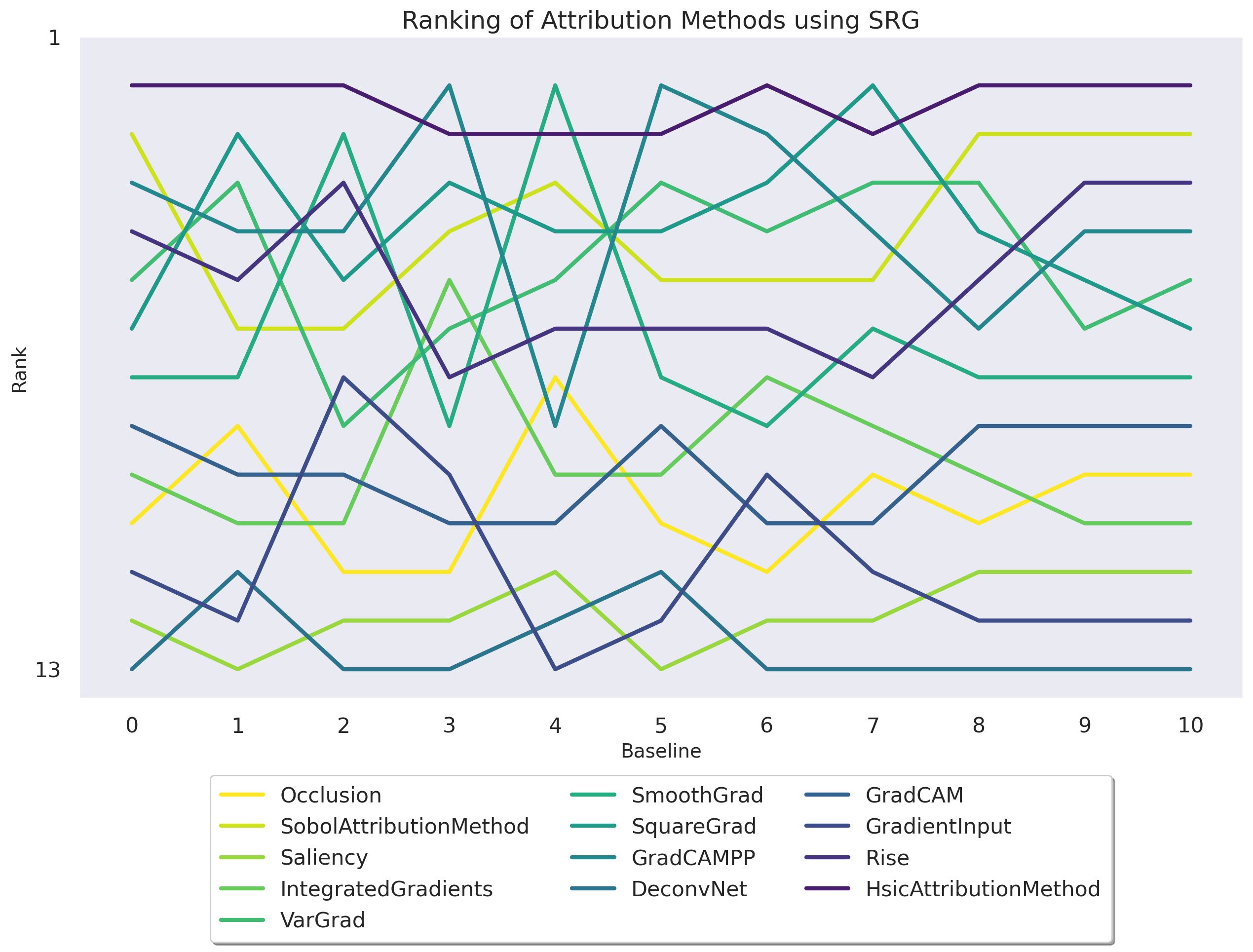}
\caption{\textbf{Ranking of attribution methods using the SRG~\cite{blucher2024decoupling} metric.} Leveraging the Insertion and Deletion metrics computed on 14,000 explanations for a ResNet50V2~\cite{he2016deep} pre-trained on ImageNet1k~\cite{krizhevsky2012imagenet}, we compute the SRG metric for 10 different baselines and 12 different attribution methods. We observe that the metric is still baseline-dependent, unfortunately.}
\label{fig:ranking-srg}
\end{figure}

We observe in Figure~\ref{fig:ranking-srg} that this metric is as unstable as Deletion and Insertion when testing with other static (and equally acceptable) baselines. We can hence attribute this discrepancy with~\cite{blucher2024decoupling} to the lack of high-information-removal baselines in their tests.

\section{Additional results on ViT}\label{apx:vit-results}

In addition to the ResNet50V2~\cite{he2016deep} with which we generated most of the results in the main paper, we have also experimented with a Vision Transformer (ViT)~\cite{dosovitskiy2020image}. In particular, we chose a pre-trained version with a patch size of 16 from the \texttt{vit\_keras} library\footnote{https://github.com/faustomorales/vit-keras}.

\begin{figure}[H]
\centering
\includegraphics[width=0.49\textwidth]{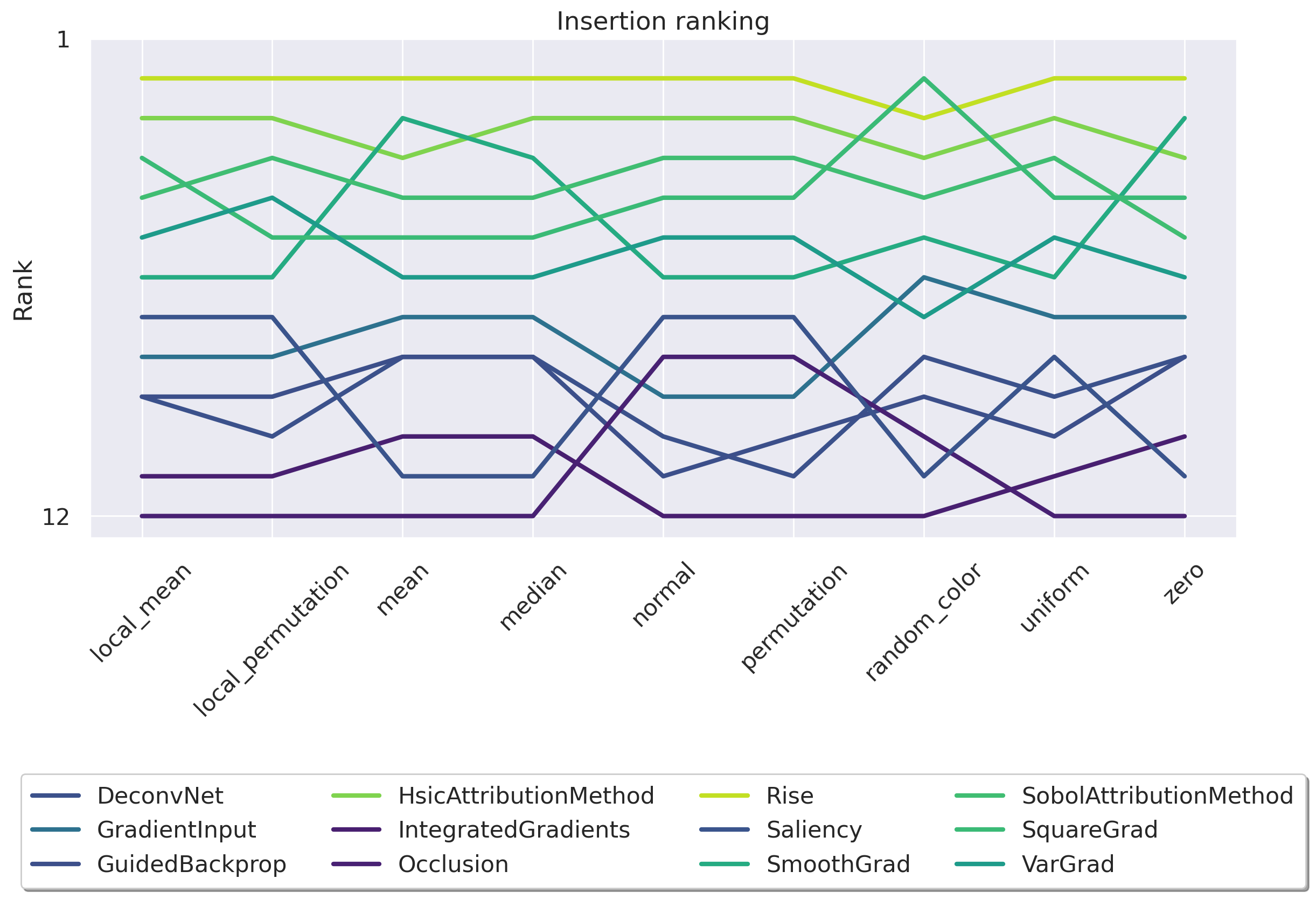}
\includegraphics[width=0.49\textwidth]{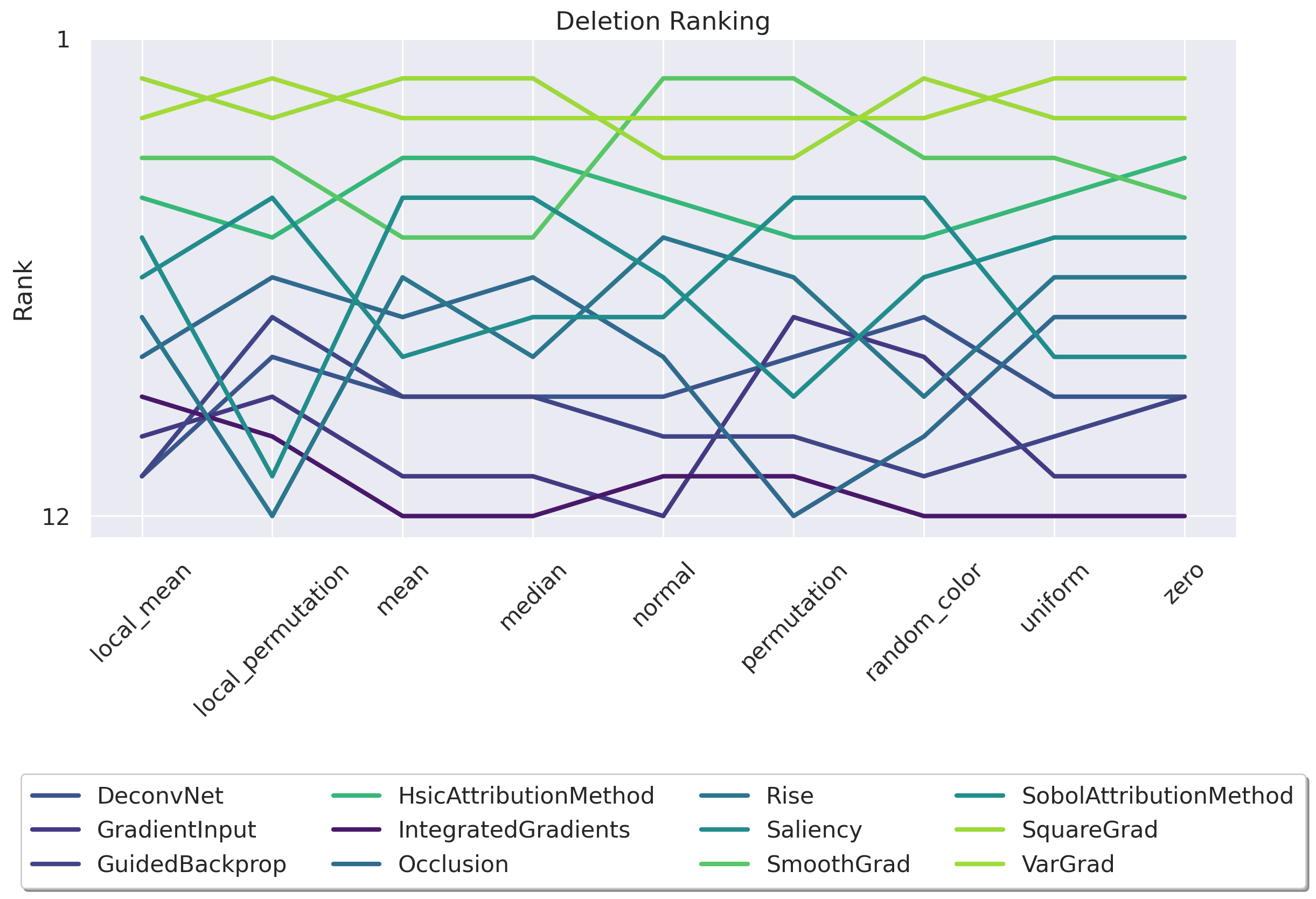}
\caption{\textbf{Ranking of attribution methods according to Insertion and Deletion for different baselines for a ViT.} We investigate how different baseline choices affect the assessment of attribution method faithfulness for a pre-trained ViT~\cite{dosovitskiy2020image} model. We evaluate Insertion and Deletion across a series of attribution methods from the literature using 9 different baselines (see Appendix~\ref{apx:baselines} for more details) on a total of 6,000 explanations. Unlike what we observe in Figure~\ref{fig:insertion-ranking}, Insertion is slightly unstable in the ViT, and Deletion is as unreliable as with the ResNet50.}
\label{fig:insertion-deletion-ranking-vit}
\end{figure}

Just like with the ResNet50, we compute the Deletion and Insertion faithfulness metrics for several attribution methods using multiple baselines and find that Deletion is too baseline-dependent to be reliable. Although slightly more stable, Insertion seems to depend much more on the baseline than on the ResNet50. This is likely due to the model's architecture and data augmentation choices during training.

\section{The Baselines}\label{apx:baselines}

In the literature, there are some baselines that come up quite often: zero, the dataset's mean, random noise, etc. In this work, we intend to be exhaustive in our findings, and hence, we devised some additional naïve baselines -- i.e., methods that one might choose to remove semantic information from images. In particular, we worked with the following baselines:

\begin{itemize}
\item \textbf{zero}: replaces the target with a value of \texttt{0.0}.
\item \textbf{random color}: substitutes the target with a randomly chosen color.
\item \textbf{uniform}: fills the target with noise sampled from a uniform distribution $\mathcal{U}[0, 1]$.
\item \textbf{normal}: replaces the target with noise drawn from a normal distribution $\mathcal{N}(0, 1)$.
\item \textbf{mean}: substitutes the target with the average color of the entire dataset.
\item \textbf{local mean}: fills the target with the average color of the specific image.
\item \textbf{median}: replaces the target with the median color of the whole dataset.
\item \textbf{permutation}: randomly shuffles the target pixels throughout the entire image.
\item \textbf{local permutation}: shuffles the target pixels within a localized area of the image.
\item \textbf{scramble magnitude}: randomly permutes the magnitude component of the image in the Fourier space.
\item \textbf{scramble phase}: randomly permutes the phase component of the image in the Fourier space.
\end{itemize}

\section{OOD curves}\label{apx:ood-curves}

As described in Section~\ref{sec:results}, we compute the OOD score at each step of the Deletion process for the image as it gets gradually more replaced by the baseline. We observe in Figure~\ref{apx:ood-curves} that some baselines can be significantly more OOD than others, which motivates our model-dependent baseline.

\begin{figure}[H]
\centering
\includegraphics[width=\textwidth]{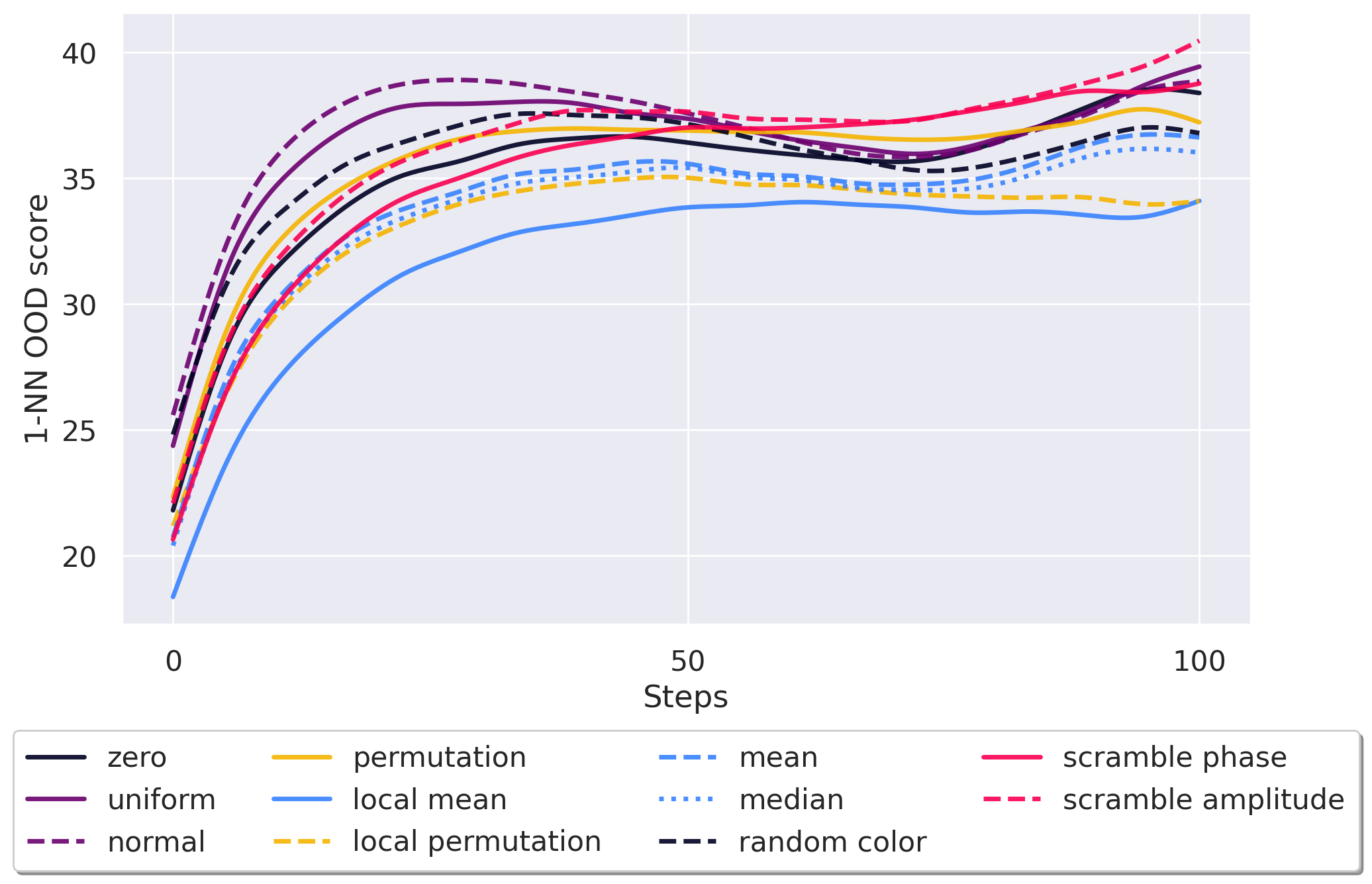}
\caption{\textbf{OOD curves during the Deletion process.} For eleven different baselines and a ResNet50V2, we estimate the OOD score of the input image as it gets progressively replaced by the baseline using the 1-NN OOD score.}
\label{fig:ood-curves}
\end{figure}

\newpage

\section{Baseline classification}\label{ap:baseline_classif}

We demonstrate that some of the baselines prevalent in the literature do not remove information efficiently. In particular, we show in Figure~\ref{fig:baseline-classif} that, when input into a ResNet50V2, the model predicts with a high softmax score for some baselines. This goes against the information removal desideratum described in Section~\ref{sec:desiderata}, which is what makes the Deletion rankings unstable.

\begin{figure}[H]
\centering
\includegraphics[width=\textwidth]{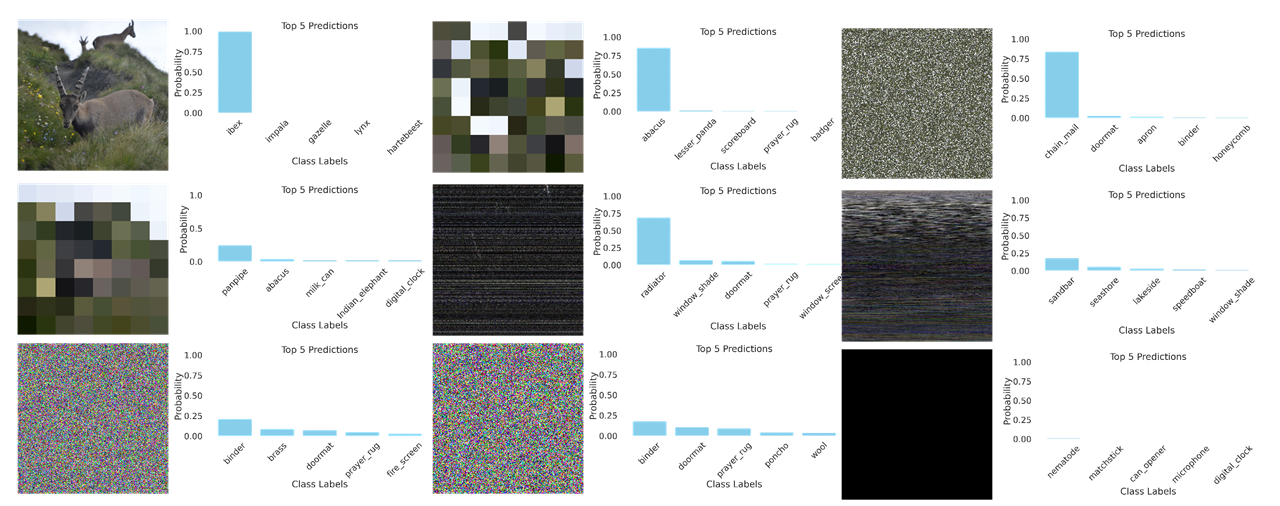}
\caption{\textbf{Classification of the baseline for a ResNet50V2.} For a ResNet50V2 pre-trained on ImageNet1k, we predict the top-5 classes for 8 different baselines, and we show that they still carry semantic information for the model.}
\label{fig:baseline-classif}
\end{figure}

\newpage

\section{Example of issues with the use of inpainting models}
\label{ap:extran_diffusion_example}

The challenge of using a diffusion model to generate the baseline lies in the difficulty of separating the impact of the inpainting model from that of the model under investigation. As illustrated in Figure~\ref{fig:diffusion_baseline_extra_example}, one attribution method generates entirely random pixels, while the other selects pixels in a more structured manner. Despite these differences, both methods ultimately reconstruct an image with the same semantic content, resulting in identical deletion scores.

\begin{figure}[H]
\centering
\includegraphics[width=0.75\textwidth]{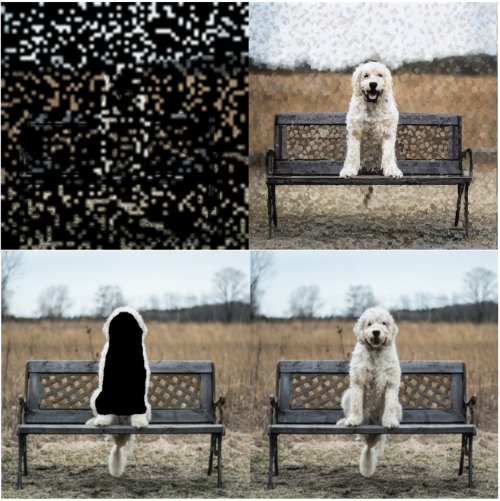}
\caption{Example with two different masks yielding very similar reconstructions that would obtain similar deletion scores.}
\label{fig:diffusion_baseline_extra_example}
\end{figure}

\end{document}